\title{Towards Reliable, Uncertainty-Aware Alignment}
\author[1]{Debangshu Banerjee}
\author[2]{Kintan Saha}
\author[3]{Aditya Gopalan}
\affil[1,3]{ Department of Electrical and Communication Engineering, Indian Institute of Science, India} 
\affil[2]{Undergraduate Programme, Indian Institute of Science, India}
\begin{document}

\maketitle

\begin{abstract}
Alignment of large language models (LLMs) typically involves training a reward model on preference data, followed by policy optimization with respect to the reward model. However, optimizing policies with respect to a {\em single} reward model estimate can render it vulnerable to inaccuracies in the reward model. We empirically study the variability of reward model training on open-source benchmarks. We observe that independently trained reward models on the same preference dataset can exhibit substantial disagreement, highlighting the instability of current alignment strategies. Employing a theoretical model, we demonstrate that variability in reward model estimation can cause overfitting, leading to the risk of performance degradation. To mitigate this risk, we propose a variance-aware policy optimization framework for preference-based alignment. The key ingredient of the framework is a new policy regularizer that incorporates reward model variance estimates. We show that variance-aware policy optimization provably reduces the risk of outputting a worse policy than the default. Experiments across diverse LLM and reward model configurations confirm that our approach yields more stable and robust alignment than the standard (variance-unaware) pipeline.
\end{abstract}

\section{Introduction}

\begin{figure}[!htbp]
    \centering
    \begin{minipage}{0.45\textwidth} 
        \centering
        \begin{tikzpicture}[
            scale = 0.70, 
            transform shape,
            node distance=0.5cm,
            every node/.style={rectangle, draw, align=center, text width=7cm},
            prompt/.style={fill=blue!20},
            response/.style={fill=green!20}
            ]

            \node[prompt] (prompt) {%
            \textbf{Prompt:}\\[5pt]
            \emph{Who created the Superman cartoon character?}
            };

            \node[response, below=of prompt] (response) {%
            \textbf{Response:}\\[0.001pt]
            \emph{Superman, the iconic comic book superhero, was created by writer Jerry Siegel and artist Joe Shuster. Superman first appeared in Action Comics \#1, which was published by Detective Comics, Inc. (later DC Comics) in June 1938.}
            };
        \end{tikzpicture}
    \end{minipage}%
    \hspace{0.04\textwidth} 
    \begin{minipage}{0.45\linewidth}
        \centering
        \includegraphics[width=0.8\linewidth]{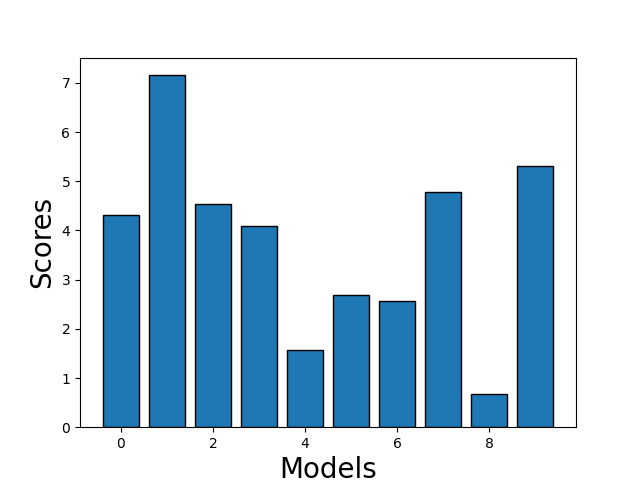} 
    \end{minipage}
    \caption{\footnotesize{Reward scores assigned by $10$ reward models on the same prompt-response pair. The reward models are identical in that they are trained independently on the same dataset, with the same hyperparameters and number of epochs. Despite this, we see a wide variation in the score assigned by each model. }}
    \label{figure:evidence}
\end{figure}

Reinforcement Learning with Human Feedback (RLHF)~\citep{christiano2017deep, ziegler2019fine} is a fine-tuning method for aligning large language models (LLMs) with human preferences. It is a key process in the post-training of systems such as \textsc{ChatGPT}~\citep{openai2023gpt}, \textsc{Claude}~\citep{claude_3}, \textsc{Gemini}~\citep{team2023gemini}, and \textsc{LLaMA-3}~\citep{llama_cite}. RLHF aims to achieve model alignment by guiding LLMs toward human-defined ethical, safety, and utility objectives. In the standard RLHF framework~\citep{ouyang2022training, bai2022constitutional, touvron2023llama}, a reward model predicts human preferences, serving as a proxy for human judgment and providing the reward signal for downstream policy optimization. 

\paragraph{Challenges of Reward Model Reliability}
Reward models in RLHF can be inherently noisy, posing challenges for robust alignment. As our experiments show, ten independently-trained, identically configured reward models assign varying scores to the same prompt-response pair (Figure~\ref{figure:evidence}). This variability stems from three key factors. First, reward models are trained on far less data, typically hundreds of thousands of preference pairs, than LLMs, limiting generalization and introducing statistical noise. Second, stochastic optimization techniques, such as mini-batch gradient descent with few training epochs~\citep{stiennon2020learning, meta2024introducing}, further increase randomness across trained instances. Lastly, in the case where absolute reward scores, instead of relative preferences, are directly elicited from a population (of humans or AI agents) for downstream usage in reward model training or policy optimization, score variability naturally arises due to diversity of opinions and biases. These factors imply that reward models are not reliable oracles, and overfitting to their noisy outputs may lead to alignment with spurious artifacts rather than true human intent.

\paragraph{Contributions} 
This work makes three primary contributions. First, we empirically demonstrate inconsistencies across identically configured reward models on open-source datasets, revealing inherent uncertainty in reward modeling (Appendix~\ref{sec:reward_model}). Second, we introduce a mathematical model of uncertain rewards and propose a conservative, variance-aware policy optimization framework that provably reduces risk under uncertainty (Section~\ref{sec:theory}). Finally, we present theoretical guarantees and numerical evidence showing that our approach mitigates policy overfitting and stabilizes reward outcomes (Section~\ref{sec:experiment}).

\paragraph{Example Illustration} To illustrate the importance of accounting for uncertainty in reward models, consider a simple bandit problem with 3 arms, for which any %
policy (`LLM') assigns probabilities to different arms ('responses') to maximize the expected return (although an LLM represents a contextual bandit policy, we consider a simplified example without contexts to make our case). In this example, the true rewards $r_1^*, r_2^*, r_3^*$ (green circles in Figure~\ref{fig:example}) satisfy $r_1^* < r_2^* < r_3^*$. However, the estimated rewards $\hat{R}_1, \hat{R}_2, \hat{R}_3$ (blue circles) suggest the opposite ranking, with Arm 1 appearing most favorable due to a spurious high estimate. If a policy is optimized purely based on these estimates, it assigns the highest probability to Arm 1, leading to a reduced true return. Crucially, the uncertainty intervals (red brackets) reveal that the reward estimate for Arm 1 is highly uncertain, while Arms 2 and 3 offer more stable, albeit lower, estimates. A more discerning, variance-aware policy improvement strategy ought to recognize this and assign lower weight to the unreliable high estimate of Arm 1, instead favoring the more dependable estimates of Arms 2 and 3. To quantify this effect, Figure~\ref{fig:example-soln} plots the expected return $\sum_{i=1}^3 \pi_i r_i^*$ over $1000$ samples $\hat{R}_i\sim \cN(r^*_i, \sigma_i^2)$, for $i =1,2,3$. Policies learned without variance information (cyan distribution) exhibit a $39\%$ chance of underperforming the initial uniform policy, whereas this probability drops to $0.15\%$ when variance is incorporated (red distribution). The base policy is a uniform distribution over the three arms and is marked as a purple vertical line. These results underscore the value of modeling uncertainty in policy optimization, especially in noisy alignment settings.

\begin{figure}[!htbp]
\begin{minipage}{0.45\linewidth}
\begin{tikzpicture}[scale=0.75, every node/.style={scale=0.75}]
\def\armspacing{2.5}

\draw[->, thick] (-1, 0) -- (2*\armspacing+1, 0) node[right]{};
\draw[->, thick] (-1, 0) -- (-1, 4.5) node[above] {Reward};

\foreach \y in {0,1,...,4} {
    \draw (-1.01, \y) -- (-0.9, \y);
    \node[left] at (-1, \y) {\small{\y}};
}

\draw[black] (0*\armspacing, 0) -- (0*\armspacing, 4);
\draw[red] (-0.2+\armspacing*0, 0.5) -- (0.2+\armspacing*0, 0.5);
\draw[red] (-0.2+\armspacing*0, 3.5) -- (0.2+\armspacing*0, 3.5);
\draw[red] (-0.2+\armspacing*0, 0.5) -- (-0.2+\armspacing*0, 0.7);
\draw[red] (0.2+\armspacing*0, 0.5) -- (0.2+\armspacing*0, 0.7);
\draw[red] (-0.2+\armspacing*0, 3.5) -- (-0.2+\armspacing*0, 3.3);
\draw[red] (0.2+\armspacing*0, 3.5) -- (0.2+\armspacing*0, 3.3);
\fill[blue] (0*\armspacing, 3.2) circle (0.07);
\node[blue, right] at (0.2+\armspacing*0, 3.2) {\small{$\hat{R}_1$}};
\fill[green!50!blue] (0*\armspacing, 1.0) circle (0.07);
\node[green!50!blue, left] at (-0.2+\armspacing*0, 1.0) {$r^*_1$};
\node[black] at (0*\armspacing, -0.5) {\footnotesize{Response/Arm 1}};

\draw[black] (1*\armspacing, 0) -- (1*\armspacing, 4);
\draw[red] (-0.2+\armspacing*1, 1.6) -- (0.2+\armspacing*1, 1.6);
\draw[red] (-0.2+\armspacing*1, 2.4) -- (0.2+\armspacing*1, 2.4);
\draw[red] (-0.2+\armspacing*1, 1.6) -- (-0.2+\armspacing*1, 1.7);
\draw[red] (0.2+\armspacing*1, 1.6) -- (0.2+\armspacing*1, 1.7);
\draw[red] (-0.2+\armspacing*1, 2.4) -- (-0.2+\armspacing*1, 2.3);
\draw[red] (0.2+\armspacing*1, 2.4) -- (0.2+\armspacing*1, 2.3);
\fill[blue] (1*\armspacing, 2.2) circle (0.07);
\node[blue, right] at (0.2+\armspacing*1, 2.2) {\small{$\hat{R}_2$}};
\fill[green!50!blue] (1*\armspacing, 1.8) circle (0.07);
\node[green!50!blue, left] at (-0.2+\armspacing*1, 1.8) {$r^*_2$};
\node[black] at (1*\armspacing, -0.5) {\footnotesize{Response/Arm 2}};

\draw[black] (2*\armspacing, 0) -- (2*\armspacing, 4);
\draw[red] (-0.2+\armspacing*2, 1.2) -- (0.2+\armspacing*2, 1.2);
\draw[red] (-0.2+\armspacing*2, 1.8) -- (0.2+\armspacing*2, 1.8);
\draw[red] (-0.2+\armspacing*2, 1.2) -- (-0.2+\armspacing*2, 1.25);
\draw[red] (0.2+\armspacing*2, 1.2) -- (0.2+\armspacing*2, 1.25);
\draw[red] (-0.2+\armspacing*2, 1.8) -- (-0.2+\armspacing*2, 1.75);
\draw[red] (0.2+\armspacing*2, 1.8) -- (0.2+\armspacing*2, 1.75);
\fill[green!50!blue] (2*\armspacing, 1.65) circle (0.07);
\node[green!50!blue, left] at (-0.2+\armspacing*2, 1.65) {$r_3^*$};
\fill[blue] (2*\armspacing, 1.35) circle (0.07);
\node[blue, right] at (0.2+\armspacing*2, 1.35) {\small{$\hat{R}_3$}};
\node[black] at (2*\armspacing, -0.5) {\footnotesize{Response/Arm 3}};

\end{tikzpicture}
\caption{\footnotesize A 3-armed bandit problem illustrating true rewards $r_1^*, r_2^*, r_3^*$ (green circles), estimated rewards $\hat{R}_1, \hat{R}_2, \hat{R}_3$ (blue circles), and their uncertainty intervals (red brackets). Although Arm 1 has the lowest true reward, it yields the highest estimate $\hat{R}_1$, while Arms 2 and 3 have more accurate but lower estimates. A naive policy that relies solely on the estimates would overemphasize Arm 1, resulting in a lower expected return. In contrast, a variance-aware strategy that accounts for the higher uncertainty in $\hat{R}_1$ assigns it a lower probability, leading to a more robust and higher average return. 
}
\label{fig:example}
\end{minipage}
\hfill
\begin{minipage}{0.51\linewidth}
    \includegraphics[width=1.08\linewidth]{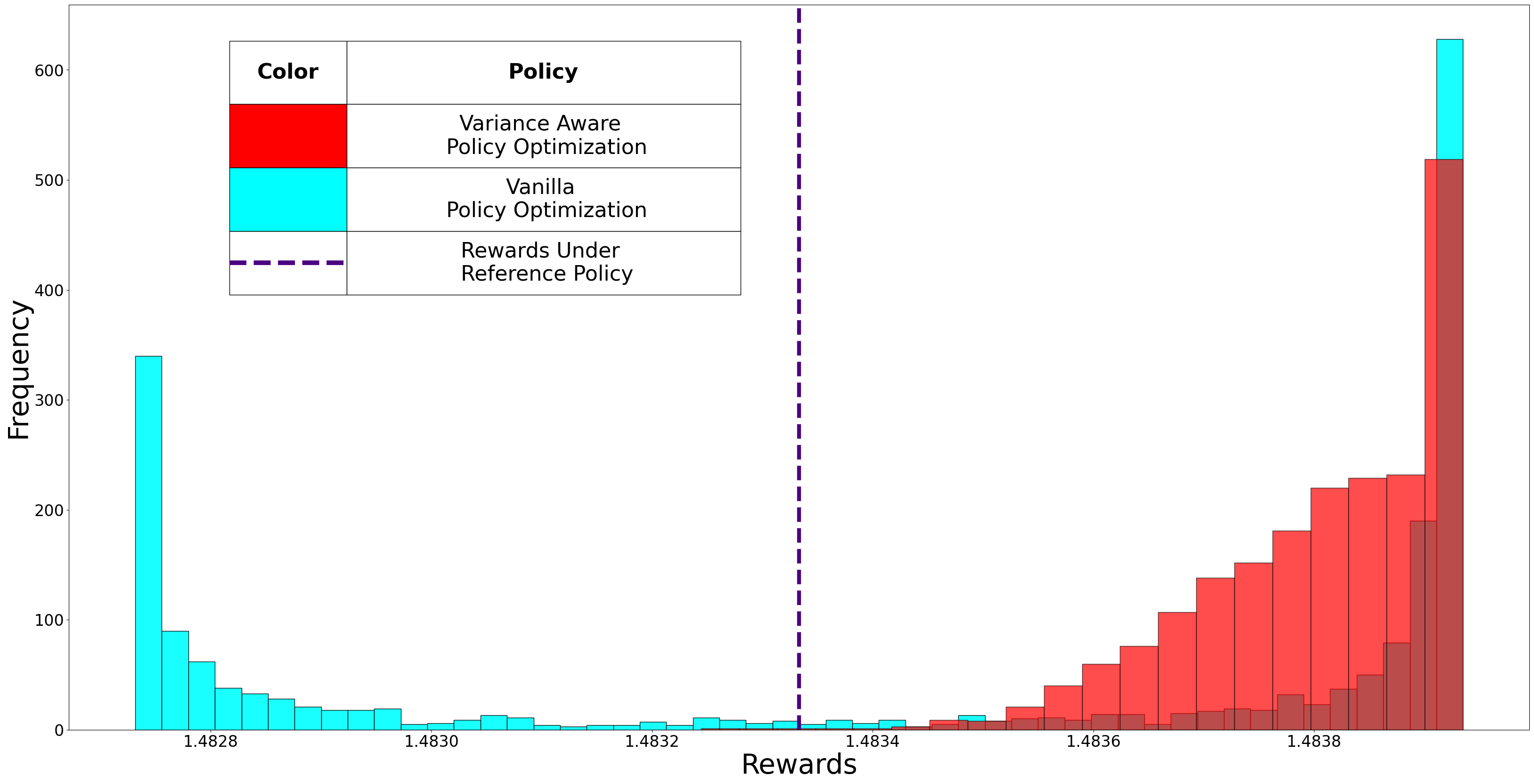}
    \caption{\footnotesize
We plot the expected return $\sum_{i=1}^3 \pi_i r^*_i$ for the 3-armed bandit example shown in Figure \ref{fig:example}, where the probabilities $\{\pi_i\}_{i=1}^3$ are computed based on observed estimates $\{\hat{R}_i\}_{i=1}^3$. We generate 1000 normally distributed samples of $\{\hat{R}_i\}$, and for each sample, derive a corresponding probability vector $\pi$. The cyan-colored distribution represents the returns when $\pi$ is chosen without considering reward uncertainty, while the red distribution accounts for variance information. The initial policy is uniform over the three arms, yielding an average return of $1.4833$. Without using variance information, there is a $39\%$ chance of performing worse than the initial policy, whereas this probability drops to $0.15\%$ when variance-aware probabilities are used. This highlights the reduced risk associated with incorporating uncertainty into policy selection.
}
\label{fig:example-soln}
\end{minipage}
\end{figure}

\paragraph{Related Work.} 
Reward models in RLHF approximate human preferences but are vulnerable to failures like \textit{reward hacking}~\citep{amodei2016concrete} and \textit{overoptimization}~\citep{gao2023scaling}, especially under limited data and stochastic training. To address this, prior work has proposed ensemble methods~\citep{coste2023reward, eisenstein2023helping, rame2024warm, zhang2024improving}, conservative objectives using lower confidence bounds (LCBs)~\citep{liang2022reward, zhai2023uncertainty, xiong2024iterative}, and uncertainty modeling via Bayesian dropout~\citep{gal2016dropout} or reparameterization~\citep{lou2024uncertainty}. Recent approaches bypass separate reward models by prompting LLMs directly~\citep{kwon2023reward, llmasajudge2023, singla-etal-2024-dynamic}, or by eliciting uncertainty estimates from LLMs via confidence scores~\citep{xiong2023can}, rationality judgments~\citep{tanneru2023quantifying}, ensemble variance~\citep{zhang2024bayesian}, or belief updates~\citep{liu2024iterative}. We build on these insights by explicitly incorporating reward uncertainty—via ensembles or prompts—into policy optimization to improve robustness.
\section{Mathematical Modeling}
\label{sec:theory}
\paragraph{Notations:} 
We assume that prompts are strings $x$ from a prompt set $\mathcal{X}$, and responses are strings $y$ from a response set $\mathcal{Y}$\footnote{Typically, $\mathcal{X}$ and $\mathcal{Y}$ both equal the space of all finite-length token sequences in an LLM context, but they can differ depending on the  application.}. A reward model assigns a scalar value to each prompt–response pair, i.e., $\hat{R} : \mathcal{X} \times \mathcal{Y} \to \mathbb{R}$. We model a large language model (LLM) as a policy $\pi$ that defines a probability distribution over responses given a prompt, i.e., $\pi(\cdot \mid x) \in \cP({\cY})$. We denote by $\rho_{\cX}$ the distribution over prompts. With a slight abuse of notation, we treat the policy $\pi$ as an induced joint distribution over $(x, y)$ pairs, which allows us to write the expected reward compactly as $\hat{R}^\top \pi$. For any vector $d$, we define the $\Sigma$-weighted norm as $\|d\|_{\Sigma} := \sqrt{d^\top \Sigma d}$, where $\Sigma$ is a positive semi-definite matrix.

\begin{restatable}[\textbf{Noisy Reward Model}]{assumption}{noisyreward}
\label{assm:gaussian}
We assume that $\hat{R}$ is a noisy observation of the true reward function $r^*$. For any $(x, y)$, the observed reward $\hat{R}(x, y)$ is modeled as a Gaussian perturbation of $r^*(x, y)$:
\[
\hat{R}(x, y) = r^*(x, y) + \mathcal{N}\big(0, \sigma^2(x, y)\big),
\]
where $\mathcal{N}(0, \sigma^2(x, y))$ is a Gaussian random variable with mean zero and variance $\sigma^2(x, y)$. The values $\hat{R}(x, y)$ are assumed to be independent across different $(x, y)$.

This implies that $\hat{R} \sim \mathcal{N}(r^*, \Sigma)$, where $\Sigma$ is a diagonal matrix with entries $\sigma^2(x, y)$. Let $\pi_0$ be a reference policy (e.g., from pretraining), and define $d = \pi - \pi_0$ as the pointwise difference between the current and reference policies. Then, the inner product $\hat{R}^\top d$ is normally distributed with mean $r^{*\top} d$ and variance $d^\top \Sigma d$.
\end{restatable}

\paragraph{Lower Bound on the True Objective Function:}
The following theorem provides a high-probability lower bound on the optimization objective that incorporates the uncertainty in the reward estimates. The proof is provided in Appendix~\ref{sec:Proofs}.

\begin{restatable}{theorem}{surrogate}
\label{thm:surrogate}
Under the reward observation model in Assumption~\ref{assm:gaussian}, for any $\beta > 0$, the following inequality holds with probability at least $1 - \exp\left(- \frac{\abs{\cX}\abs{\cY}}{\beta^2} \right)$:
\[
\sup_{d \in \left[-2, 2\right]^{\abs{\mathcal{X}}\abs{\mathcal{Y}}}} \left( \hat{R}^\top d - \beta \|d\|_{\Sigma} \right) 
\leq
\sup_{d \in \left[-2, 2\right]^{\abs{\mathcal{X}}\abs{\mathcal{Y}}}} r^{*\top} d.
\]
\end{restatable}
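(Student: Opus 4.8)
The plan is to isolate the randomness into a single whitened-noise vector and reduce the entire supremum inequality to a norm bound on that vector, which I can then control by a standard Gaussian tail estimate. Write $z := \hat{R} - r^*$, so that $z \sim \mathcal{N}(0,\Sigma)$, and set $n := \abs{\cX}\abs{\cY}$. For any $d$ in the box I would add and subtract $r^{*\top} d$ to get $\hat{R}^\top d - \beta\|d\|_\Sigma = r^{*\top}d + \big(z^\top d - \beta\|d\|_\Sigma\big)$. Hence, if the noise term satisfies $z^\top d - \beta\|d\|_\Sigma \le 0$ for every $d$ in the box, then $\hat{R}^\top d - \beta\|d\|_\Sigma \le r^{*\top} d \le \sup_{d' \in [-2,2]^n} r^{*\top} d'$ for every such $d$, and taking the supremum over $d$ on the left yields exactly the claimed inequality. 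So it suffices to show that the event $\mathcal{E} := \{\, z^\top d \le \beta \|d\|_\Sigma \text{ for all } d \in [-2,2]^n \,\}$ holds with probability at least $1 - \exp(-n/\beta^2)$; note that I need not reason about the (random) maximizer of the left-hand side directly, only about this uniform event.

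Next I would remove the supremum over $d$ by Cauchy–Schwarz in the $\Sigma$-weighted geometry. Writing $w := \Sigma^{-1/2} z$, so that $z^\top d = w^\top (\Sigma^{1/2} d)$, I get $z^\top d \le \|w\|_2 \,\|\Sigma^{1/2} d\|_2 = \|w\|_2 \,\|d\|_\Sigma$. Consequently $\|w\|_2 \le \beta$ already forces $z^\top d \le \beta\|d\|_\Sigma$ for all $d$ simultaneously, i.e. it implies $\mathcal{E}$. Because the ratio $z^\top d / \|d\|_\Sigma$ is scale-invariant, restricting $d$ to the box rather than all of $\mathbb{R}^n$ does not relax this condition, so $\|w\|_2 \le \beta$ is in fact equivalent to $\mathcal{E}$. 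The problem is thereby reduced to the single scalar tail estimate $\Pr(\|w\|_2 > \beta) \le \exp(-n/\beta^2)$.

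Finally, since $z \sim \mathcal{N}(0,\Sigma)$ the whitened vector is standard Gaussian, $w \sim \mathcal{N}(0, I_n)$, so $\|w\|_2^2 \sim \chi^2_n$, and the whole theorem comes down to a chi-squared upper-tail bound $\Pr(\chi^2_n > \beta^2) \le \exp(-n/\beta^2)$, which I would attempt via the Chernoff method (optimizing $\Pr(\chi^2_n > \beta^2) \le e^{-\lambda\beta^2}(1-2\lambda)^{-n/2}$ over $\lambda \in (0,1/2)$). I expect this concentration step to be the main obstacle and the place to be most careful: it is where the exponent $n/\beta^2 = \abs{\cX}\abs{\cY}/\beta^2$ must be produced, and since $\mathbb{E}\|w\|_2^2 = n$, such a bound can only be in force once $\beta^2$ comfortably exceeds $n$ (for $\beta^2 \lesssim n$ the event $\|w\|_2 \le \beta$ is itself unlikely). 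I would therefore pin down the admissible range of $\beta$ and, if needed, exploit the box constraint $[-2,2]^n$ in the reduction (rather than discarding it through scale-invariance) to recover the stated probability across the regime of interest.
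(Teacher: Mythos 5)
Your reduction is exactly the one the paper uses: the paper also writes the error term as $\langle d,\hat R-r^*\rangle$, bounds it by $\|d\|_\Sigma\,\|\hat R-r^*\|_{\Sigma^{-1}}$ via Cauchy--Schwarz in the $\Sigma$-weighted geometry, and then controls $\|\hat R-r^*\|_{\Sigma^{-1}}$ --- which is precisely your $\|w\|_2$ --- by a chi-squared--type tail bound (the paper calls it ``a standard self-normalizing bound''). So in approach the two arguments coincide, with yours additionally spelling out the decomposition and the passage to the supremum that the paper leaves implicit.

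The obstacle you flag at the end is real, and it is a defect of the theorem statement rather than of your argument. Writing $n=\abs{\cX}\abs{\cY}$, the paper's cited bound is $\Pr\bigl(\|\hat R-r^*\|_{\Sigma^{-1}}>\sqrt{n\ln(1/\delta)}\bigr)\le\delta$; matching $\beta=\sqrt{n\ln(1/\delta)}$ gives failure probability $\delta=\exp(-\beta^2/n)$, i.e.\ the exponent is the \emph{reciprocal} of the one stated in the theorem. As you observe, the literal statement would require $\Pr(\chi^2_n>\beta^2)\le\exp(-n/\beta^2)$, which fails whenever $\beta^2\lesssim n$ since $\mathbb{E}[\chi^2_n]=n$. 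In fact your setup yields a counterexample to the statement as written: take $r^*=0$ and $\Sigma=I$, in which case success is \emph{equivalent} to $\|w\|_2\le\beta$ (the box cannot help here, by the scale-invariance you noted), so with $\beta$ fixed and $n$ large the success probability $\Pr(\chi^2_n\le\beta^2)$ tends to $0$ while the claimed lower bound $1-e^{-n/\beta^2}$ tends to $1$. A further sanity check pointing the same way: the stated probability is decreasing in $\beta$, whereas the event only becomes easier as $\beta$ grows. Finally, even with the exponent corrected to $\beta^2/n$, the ``standard'' bound the paper invokes is itself valid only for sufficiently small $\delta$ (the correct Laurent--Massart threshold is $n+2\sqrt{nx}+2x$, not $nx$), so a fully rigorous version needs exactly the restriction you propose: $\beta^2$ comfortably larger than $n$, with the Chernoff computation you set up supplying the bound in that regime. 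In short, your proof attempt is faithful to the paper's, and the step you could not close cannot be closed by anyone --- it requires repairing the statement.
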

The above result implies that the optimization problem on the left-hand side serves as a high-probability lower bound for the true reward maximization problem, which depends on the unobservable function $r^*$. Since only noisy estimates $r^*$ are available through $\hat R$, we propose the following surrogate optimization objective:
\begin{align}
\label{eq:obj_fn}
    \sup_{d \in \left[-2, 2\right]^{\abs{\mathcal{X}}\abs{\mathcal{Y}}}} \hat{R}^\top d - \beta \|d\|_{\Sigma}.
\end{align}
This leads to the following constrained optimization problem:
\[
\max_\pi \hat{R}^{\top} \pi \quad \text{subject to} \quad (\pi - \pi_0)^\top \Sigma (\pi - \pi_0) \leq \epsilon,
\]
for some $\epsilon > 0$. The weighted quadratic constraint penalizes deviations from the reference policy more heavily for prompt–response pairs with higher uncertainty in the reward model, thereby incorporating variance-awareness into the optimization.

\begin{remark}
Variants of the constrained optimization problem above have appeared in prior literature. For example, the unconstrained version reduces to the vanilla policy gradient method~\citep{sutton1999policy}. In RLHF, the PPO algorithm~\citep{schulman2017proximal} imposes a KL-divergence constraint to regularize updates, though the choice of distance metric is flexible~\citep{ziegler2019fine}. An $\ell_2$ approximation to the KL constraint leads to an unweighted constraint of the form $\|\pi - \pi_0\|_2^2 \leq \epsilon$. Similarly, the Trust Region Policy Optimization (TRPO) algorithm~\citep{schulman2015trust} uses a Fisher-weighted trust region constraint derived from the natural policy gradient.
\end{remark}

\paragraph{Theoretical Analysis:} 
We compare the performance of the variance-aware LLM alignment methodology with its variance-unaware counterpart to evaluate how incorporating reward estimate uncertainty affects policy robustness and effectiveness—particularly in scenarios with noisy reward signals. We consider two policies, $\pi_1$ and $\pi_2$, derived from different constrained optimization formulations.

Let $\pi_1$ denote the ``vanilla'' policy obtained by solving the unweighted $\ell_2$-constrained problem:
\begin{equation}
\label{eq:pi1}
    \pi_1 = \arg\max_\pi \, \pi^\top \hat{R} \quad \text{subject to} \quad \|\pi - \pi_0\|_2^2 \leq \epsilon.
\end{equation}

Let $\pi_2$ denote a variance-aware policy that is obtained by solving the covariance-weighted $\ell_2$-constrained problem:
\begin{equation}
\label{eq:pi2}
    \pi_2 = \arg\max_\pi \, \pi^\top \hat{R} \quad \text{subject to} \quad \|\pi - \pi_0\|_\Sigma^2 \leq \tilde{\epsilon}.
\end{equation}

To ensure a fair comparison between the policies resulting from the two policy optimization formulations, we set $\tilde{\epsilon} = \lambda_{\min}(\Sigma) \cdot \epsilon$. This scaling ensures that the largest axis of the ellipsoid in the variance-aware constraint \eqref{eq:pi2} is aligned with the radius of the $\ell_2$ ball in the vanilla constraint \eqref{eq:pi1}. We evaluate the expected true rewards $\pi^\top r^*$ for both methods and compare them to the reference policy reward $\pi_0^\top r^*$. We show that the variance-aware policy is less likely to underperform relative to the reference policy than the vanilla policy, thereby demonstrating its robustness under uncertainty.

\begin{restatable}{theorem}{risk}
\label{thm:main}
Consider the vanilla policy $\pi_1$ and the variance-aware policy $\pi_2$ as defined in Equations~\eqref{eq:pi1} and~\eqref{eq:pi2}, respectively. Let $\tilde{\epsilon} = \lambda_{\min}(\Sigma) \cdot \epsilon$ so that the feasible set of the variance-aware optimization is no larger than that of the variance-unaware one. Then, the following inequality holds:
\[
\mathbb{P}\left( \pi_2^\top r^* \leq \pi_0^\top r^* \right) \leq \mathbb{P}\left( \pi_1^\top r^* \leq \pi_0^\top r^* \right).
\]
\end{restatable}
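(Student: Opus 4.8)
The plan is to reduce both probabilities to closed form by using the fact that each policy update $d_i := \pi_i - \pi_0$ maximizes a linear functional over a convex body and hence has an explicit expression. First I would solve the two problems \eqref{eq:pi1} and \eqref{eq:pi2} via Lagrange multipliers (KKT). Maximizing $d^\top \hat{R}$ over the Euclidean ball $\|d\|_2^2 \le \epsilon$ yields $d_1 = \sqrt{\epsilon}\,\hat{R}/\|\hat{R}\|_2$, while maximizing over the ellipsoid $d^\top \Sigma d \le \tilde{\epsilon}$ yields $d_2 = \sqrt{\tilde{\epsilon}}\,\Sigma^{-1}\hat{R}/\|\hat{R}\|_{\Sigma^{-1}}$. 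Here I use that $\Sigma$ is diagonal with strictly positive entries, so it is invertible and $\Sigma^{-1}$ is positive definite.

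The crucial observation is that both scaling factors are strictly positive. Hence the event $\{\pi_1^\top r^* \le \pi_0^\top r^*\} = \{d_1^\top r^* \le 0\}$ is equivalent to $\{\hat{R}^\top r^* \le 0\}$, and likewise $\{\pi_2^\top r^* \le \pi_0^\top r^*\}$ is equivalent to $\{\hat{R}^\top \Sigma^{-1} r^* \le 0\}$. In particular the radii $\epsilon, \tilde{\epsilon}$, and thus the calibration $\tilde{\epsilon} = \lambda_{\min}(\Sigma)\,\epsilon$, drop out entirely: each underperformance event depends only on the sign of an inner product of the Gaussian vector $\hat{R}$ with a fixed direction ($r^*$ for the vanilla policy, $\Sigma^{-1} r^*$ for the variance-aware one).

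Next I would compute the law of each scalar statistic using $\hat{R} \sim \mathcal{N}(r^*, \Sigma)$. A direct mean/variance calculation gives $\hat{R}^\top r^* \sim \mathcal{N}\big(\|r^*\|_2^2,\, \|r^*\|_\Sigma^2\big)$ and $\hat{R}^\top \Sigma^{-1} r^* \sim \mathcal{N}\big(\|r^*\|_{\Sigma^{-1}}^2,\, \|r^*\|_{\Sigma^{-1}}^2\big)$, where the second variance simplifies because $(\Sigma^{-1} r^*)^\top \Sigma (\Sigma^{-1} r^*) = r^{*\top} \Sigma^{-1} r^*$. Writing $\Phi$ for the standard normal CDF, the two probabilities become $\mathbb{P}(\pi_1^\top r^* \le \pi_0^\top r^*) = \Phi\big(-\|r^*\|_2^2/\|r^*\|_\Sigma\big)$ and $\mathbb{P}(\pi_2^\top r^* \le \pi_0^\top r^*) = \Phi\big(-\|r^*\|_{\Sigma^{-1}}\big)$. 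Since $\Phi$ is increasing, the claimed inequality reduces to $\|r^*\|_{\Sigma^{-1}} \ge \|r^*\|_2^2/\|r^*\|_\Sigma$, i.e.\ $\|r^*\|_\Sigma\,\|r^*\|_{\Sigma^{-1}} \ge \|r^*\|_2^2$, which is exactly Cauchy--Schwarz applied to $\Sigma^{1/2} r^*$ and $\Sigma^{-1/2} r^*$, whose inner product equals $\|r^*\|_2^2$.

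The argument is short once these pieces are assembled, so there is no single heavy computation; the main conceptual obstacle is recognizing that the whole statement collapses onto the signs of two Gaussian linear forms, which makes the radius calibration wash out and turns the claim into a clean norm inequality settled by Cauchy--Schwarz. A secondary point to handle with care is well-posedness: I must assume $\sigma^2(x,y) > 0$ everywhere so that $\Sigma^{-1}$ exists, and note that ties such as $\hat{R}^\top r^* = 0$ occur with probability zero, so strict versus non-strict inequalities are immaterial. If one instead enforces the probability-simplex constraints on $\pi$ (rather than only the norm-ball constraints as written), the closed forms for $d_1, d_2$ would pick up projection terms and the reduction would need to be revisited.
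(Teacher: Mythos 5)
Your proposal is correct and follows essentially the same route as the paper's proof: closed-form solutions of both constrained problems via Lagrange multipliers, reduction of the underperformance events to the signs of the Gaussian linear forms $\hat{R}^\top r^*$ and $\hat{R}^\top \Sigma^{-1} r^*$, explicit normal laws yielding the two $\Phi$ expressions, and the Cauchy--Schwarz inequality applied to $\Sigma^{1/2} r^*$ and $\Sigma^{-1/2} r^*$ to conclude. Your added remarks on invertibility of $\Sigma$, the measure-zero tie event, and the omitted simplex constraints are sensible points of care that the paper glosses over, but they do not change the argument.
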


\begin{remark}
Theorem~\ref{thm:main} highlights a risk–reward trade-off. While the vanilla policy $\pi_1$ may yield higher rewards when $\hat{R}$ closely approximates $r^*$, it is riskier because it disregards the uncertainty in reward estimates. By contrast, the variance-aware policy $\pi_2$ explicitly accounts for variance and reduces the probability of underperforming relative to the reference policy $\pi_0$. The proof is provided in Appendix~\ref{sec:Proofs}.
\end{remark}

\begin{remark}
The variance-aware policy is closely related to the notion of the reward-to-variability ratio, known in finance as the Sharpe Ratio~\citep{sharpe1966mutual}, which balances expected return against risk.

\begin{restatable}{theorem}{sharpe}
Consider the following constrained optimization problem:
\[
\max_\pi \; \mathbb{E}_{x \sim \rho_{\cX},\, y \sim \pi(\cdot|x)} \left[ \hat{R}(x, y) \right] \quad \text{subject to} \quad \mathbb{E}_{x \sim \rho_{\cX},\, y \sim \pi(\cdot|x)} \left[ \sigma^2(x, y) \ln \frac{\pi(y|x)}{\pi_0(y|x)} \right] \leq \epsilon,
\]
where $\hat{R}(x, y)$ and $\sigma^2(x, y)$ are the estimated reward and its variance. The optimal policy has the form:
\[
\pi^*(y|x) \propto \pi_0(y|x) \exp\left( \frac{\hat{R}(x, y)}{\beta \sigma^2(x, y)} \right),
\]
for some $\beta > 0$. Thus, the optimal policy assigns higher probability to actions with a high reward-to-variance ratio, analogous to the Sharpe Ratio in portfolio theory.
\end{restatable}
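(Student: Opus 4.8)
The plan is to treat this as a constrained convex program over the policy simplex and to apply the Karush--Kuhn--Tucker (KKT) conditions. First I would verify that the problem has the structure needed for a clean Lagrangian treatment: the objective $\mathbb{E}_{x\sim\rho_{\mathcal{X}},\,y\sim\pi}[\hat{R}(x,y)]$ is linear (hence concave) in $\pi$, while the constraint functional $g(\pi)=\mathbb{E}_{x,y}[\sigma^2(x,y)\ln(\pi(y|x)/\pi_0(y|x))]$ is convex, since each summand is the nonnegatively weighted map $t\mapsto t\ln(t/c)$ whose second derivative $1/t$ is positive. Taking $\pi=\pi_0$ gives $g(\pi_0)=0<\epsilon$, so (assuming $\pi_0$ has full support) Slater's condition holds and strong duality applies; the KKT conditions are then necessary and sufficient for global optimality.

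Next I would form the Lagrangian by introducing a single multiplier $\beta>0$ for the variance-weighted constraint together with per-prompt multipliers $\nu(x)$ enforcing $\sum_y \pi(y|x)=1$. Because both the objective and the constraint decompose into $\rho_{\mathcal{X}}$-weighted sums over prompts, stationarity can be imposed prompt-by-prompt. Differentiating with respect to $\pi(y|x)$ and using $\tfrac{d}{dt}[t\ln(t/c)]=\ln(t/c)+1$ gives, after rearranging, an expression of the form $\ln(\pi(y|x)/\pi_0(y|x)) = \hat{R}(x,y)/(\beta\sigma^2(x,y))+(\text{reward-independent terms})$. Exponentiating yields $\pi^*(y|x)\propto \pi_0(y|x)\exp(\hat{R}(x,y)/(\beta\sigma^2(x,y)))$, which is precisely the claimed Gibbs/exponential-tilt form; the proportionality constant is then fixed by per-prompt normalization, and complementary slackness (the constraint being active for small enough $\epsilon$) pins $\beta$ down as a function of $\epsilon$.

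The step that requires the most care is the bookkeeping of the normalization multiplier $\nu(x)$. Because the entropic penalty carries the reward-independent weight $\sigma^2(x,y)$ inside the constraint, the exact stationarity equation produces a residual of the form $\nu(x)/(\beta\sigma^2(x,y))$, which is \emph{not} constant in $y$ and therefore does not collapse into a single multiplicative normalizer as cleanly as in the standard unweighted KL-regularized derivation that gives $\pi\propto\pi_0\exp(\hat{R}/\beta)$. I would handle this by deriving the stationary point in unnormalized (pointwise) form and then absorbing the reward-independent contribution into the per-prompt normalizing constant $Z(x)$, which is exactly what the stated ``$\propto$'' notation encodes; I would be explicit that it is this normalization step, rather than a coordinate-wise cancellation, that produces the final form. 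Finally, since the program is convex with strong duality, the stationary point is the global maximizer, and I would close by interpreting the exponent $\hat{R}(x,y)/\sigma^2(x,y)$ as the discrete analogue of the reward-to-variability (Sharpe) ratio, with $\beta$ acting as a temperature trading expected reward against variance-weighted deviation from $\pi_0$, thereby completing the connection flagged in the remark.
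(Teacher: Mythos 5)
Your convex-programming setup (linear objective, convex weighted-entropy constraint, Slater's condition at $\pi_0$, KKT) is sound, and you correctly identified the crux — but your resolution of that crux is exactly where the proof breaks. Writing the per-prompt stationarity condition with multiplier $\beta>0$ for the variance-weighted constraint and $\nu(x)$ for normalization gives
\[
\hat{R}(x,y) - \beta\,\sigma^2(x,y)\left(\ln\frac{\pi(y|x)}{\pi_0(y|x)} + 1\right) - \nu(x) = 0,
\]
hence
\[
\pi(y|x) \;=\; e^{-1}\,\pi_0(y|x)\,\exp\left(\frac{\hat{R}(x,y)}{\beta\,\sigma^2(x,y)}\right)\exp\left(\frac{-\nu(x)}{\beta\,\sigma^2(x,y)}\right).
\]
The factor $e^{-1}$ is harmless, but the factor $\exp\left(-\nu(x)/(\beta\sigma^2(x,y))\right)$ depends on $y$ whenever $\sigma^2(x,\cdot)$ is non-constant, so it \emph{cannot} be absorbed into a per-prompt normalizer $Z(x)$: ``$\propto$'' means equality up to a $y$-independent factor, and this factor is not $y$-independent. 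You noticed precisely this residual and then declared it absorbable into $Z(x)$; that is the step that would fail. The stationary point has the claimed form only if $\nu(x)=0$, and generically you cannot force $\nu(x)=0$ while simultaneously satisfying normalization and complementary slackness (the weighted constraint holding with equality), since that is two conditions with only $\beta$ left free.

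For comparison, the paper's proof takes a different and much shorter route: it forms the Lagrangian $\mathbb{E}\left[\hat{R} - \beta\sigma^2\ln(\pi/\pi_0)\right]$, rescales pointwise to $\mathbb{E}\left[\hat{R}/(\beta\sigma^2) - \ln(\pi/\pi_0)\right]$, and then invokes the standard unweighted-KL exponential-tilting result (citing the DPO derivation). Note that this rescaling is itself not an equivalence of optimization problems — dividing inside the expectation by a $y$-dependent positive weight changes the maximizer — so the paper effectively establishes the claimed form for the modified objective with reward $\hat{R}/(\beta\sigma^2)$ and a plain KL penalty, rather than for the weighted-KL constraint as literally stated. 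Your KKT route is the more rigorous one and it exposes this discrepancy; to close the argument you would either have to prove the statement for that modified (rescaled-reward, unweighted-KL) problem, which is what the claimed formula actually solves, or restate the optimum as $\pi^*(y|x)\propto\pi_0(y|x)\exp\bigl((\hat{R}(x,y)-\nu(x))/(\beta\sigma^2(x,y))\bigr)$ with $\nu(x)$ determined by normalization. As written, your proof attempt contains a genuine gap, and it cannot be patched by the normalization bookkeeping you propose.
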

\end{remark}

\begin{figure}[!htbp]
    \centering
    \begin{minipage}{0.48\linewidth}
        \includegraphics[width=\linewidth]{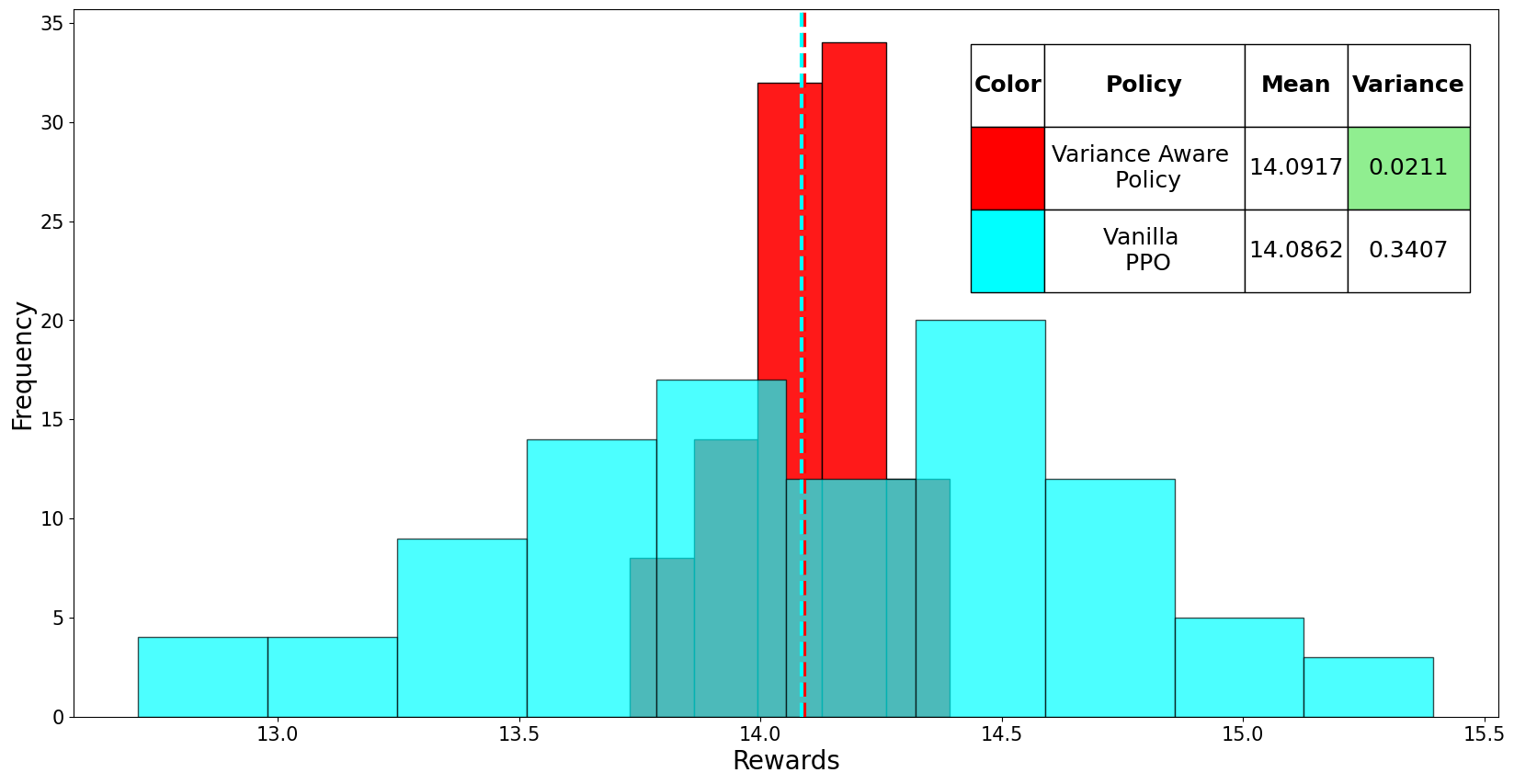}
        \caption*{\scriptsize High-variability setting: Reward variances sampled from $(3, 100)$. The variance-aware policy yields a significantly sharper distribution than the vanilla policy. Variance ratio: $\sigma^2_{\text{vanilla}} / \sigma^2_{\text{variance-aware}} = 16.12$.}
    \end{minipage}
    \hfill
    \begin{minipage}{0.48\linewidth}
        \includegraphics[width=\linewidth]{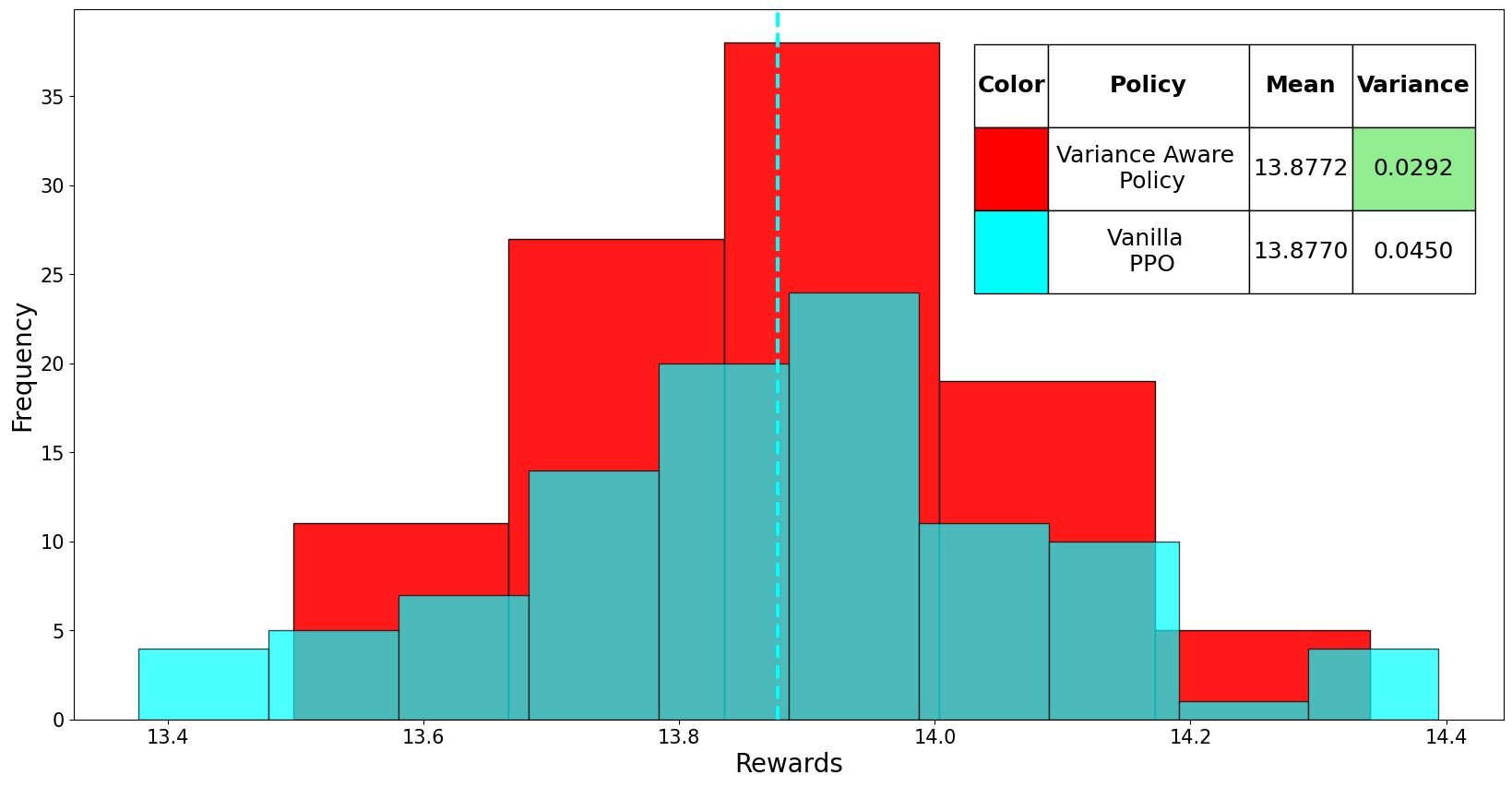}
        \caption*{\scriptsize Low-variability setting: Reward variances sampled from $(3, 10)$. The variance-aware policy still exhibits lower variance, though the improvement is less pronounced. Variance ratio: $\sigma^2_{\text{vanilla}} / \sigma^2_{\text{variance-aware}} = 1.54$.}
    \end{minipage}
    \caption{\footnotesize Distribution of policy returns under different reward variance settings. In both cases, the true reward vector $r^*$ is fixed, and reward estimates $\hat{R}$ are sampled from a multivariate Gaussian with varying covariance matrices. While the variance-aware policy consistently exhibits lower return variance, the relative gain is substantially higher in the high-variability setting, highlighting its robustness under uncertainty.}
    \label{fig:distribution}
\end{figure}

The advantages of the variance-aware method become more pronounced when there is greater heterogeneity in reward estimate variances across prompt–response pairs. In Figure~\ref{fig:distribution}, we simulate two settings: one where the variances are drawn from the interval $(3, 100)$ and another from $(3, 10)$. As evident from the figure, the variance-aware method produces a notably sharper distribution in the high-variability setting, underscoring its effectiveness in managing reward uncertainty.
\section{Experiments}
\label{sec:experiment}
In this section, we corroborate our findings through empirical studies based on modern LLMs. We leverage the uncertainty in reward scores to fine-tune large language models (LLMs) using a novel, variance-aware variant of the  Proximal Policy Optimization (PPO) strategy (Algorithm \ref{alg:unified-ppo}). We compare the reward distributions obtained from the variance-aware policies with those from a vanilla PPO baseline.
Our results indicate that variance-aware policies generate content whose reward distribution exhibits significantly lower variance than that of the vanilla baseline. 

\subsection{Uncertainty Modeling}
A key challenge in this pipeline is generating meaningful uncertainty estimates for reward scores. We explore two methodologies to address this:
\begin{enumerate}[nosep,leftmargin=*,label=--]
    \item \textit{Ensemble Reward Models (Details in Appendix~\ref{sec:reward_model}):} We fine-tune an ensemble of 10 \textsc{Gemma-2B-it}\cite{gemma-2b-it} based reward models on a preference dataset. The noisy reward $\hat{R}$ is the ensemble mean, and uncertainty is measured via sample variance. For evaluation, we use \textsc{FsfairX-LLaMA3-RM-v0.1} as the true reward model $r^\ast$\citep{dong2023raft, xiong2024iterative}.\footnote{\href{https://huggingface.co/sfairXC/FsfairX-LLaMA3-RM-v0.1}{https://huggingface.co/sfairXC/FsfairX-LLaMA3-RM-v0.1}}
    \item \textit{Prompted Reward Models: } To simulate the scenario where a diverse population provides direct reward scores, we use large language models (\textsc{Gemini-1.5-flash}\cite{team2024gemini}, \textsc{Gemini-2.0-flash}\cite{google2024gemini2flash}, and \textsc{Deepseek-V3}\cite{DeepSeekAI2024DeepSeekV3TR}) as prompted reward models (abridged prompt appears below; full prompt is in Appendix~\ref{app:prompt}). Each model returns a confidence interval $[a, b]$ under the constraint $1 \leq a < b \leq R$, with smaller intervals for content whose interpretation is clear and wider ones for ambiguity. We assume the noisy reward $\hat{R}$ is uniformly distributed over $[a, b]$, with variance $(b-a)^2/12$, and define the true reward as $r^\ast = (a+b)/2$. This score interval is to represent the variability in human bias while scoring.
\end{enumerate}
\begin{center}
\begin{tcolorbox}[
  enhanced,
  colback=lightgray,
  colframe=lightgray,
  boxrule=0pt,
  sharp corners,
  width=0.9\textwidth,
  ]
  \label{box:prompt}
  \scriptsize{ 
  You represent a \textit{diverse} population of human beings. Your task is to evaluate the aesthetic value of a given text. Note that aesthetic value is subjective and can vary widely across individuals and cultures.
  Return two real numbers `a' and `b' such that:\\
  - $1 \leq a < b \leq 100$\\
  - If the population is likely to disagree strongly, widen the range else, narrow the range}
\end{tcolorbox}
\end{center}

\subsection{Variance Aware PPO}

\begin{minipage}[!htbp]{0.48\textwidth}
\centering
\begin{algorithm}[H]
\caption{PPO: Vanilla / Variance-Aware}
\begin{algorithmic}[1]
\footnotesize
\State Initialize policy $\pi_{\theta}$ with reference policy $\pi_0$
\State Choose method: \textbf{Vanilla PPO} or \textbf{Variance-Aware PPO}
\For{iteration = 1 to $N$}
    \State Sample batch $\mathbf{x} = \{x_i\}_{i=1}^B$
    \State Generate $\mathbf{y} \sim \pi_{\theta}(\cdot|\mathbf{x})$
    \State Compute rewards $\mathbf{r} = \hat{R}(\mathbf{x}, \mathbf{y})$
    \If{Variance-Aware}
        \State Also compute variances $\boldsymbol{\sigma}^2$
        \State Compute objective:
        \[
        \mathcal{L}(\theta) = \sum_{i=1}^{B}  r_i - \sigma_i^2 \log \frac{\pi_{\theta}(y_i|x_i)}{\pi_0(y_i|x_i)}
        \]
    \Else
        \State Compute objective:
        \[
        \mathcal{L}(\theta) = \sum_{x,y,r} r(x, y) - \log \frac{\pi_{\theta}(y|x)}{\pi_0(y|x)}
        \]
    \EndIf
    \State Update $\theta$ via gradient ascent on $\mathcal{L}(\theta)$
\EndFor
\end{algorithmic}
\label{alg:unified-ppo}
\end{algorithm}
\end{minipage}
\hfill
\begin{minipage}[!htbp]{0.5\linewidth}
\begin{algorithm}[H]
\caption{Evaluation of $\pi^\top r^\ast$}
\begin{algorithmic}[1]
\centering
\footnotesize
\State Load policy $\pi$
\State Sample prompts $\{x_k\}_{k=1}^M$
\State Generate responses $\{y_k \sim \pi(\cdot|x_k)\}_{k=1}^M$
\State Compute rewards $\{r^*(x_k, y_k)\}_{k=1}^M$
\State Estimate mean reward: 
\[
    \hat{r} = \frac{1}{M} \sum_{k=1}^{M} r^*(x_k, y_k)
\]
\State \textbf{Return:} $\hat{r}$
\end{algorithmic}
\label{alg:eval}
\end{algorithm} 
\end{minipage}

We fine-tune \textsc{GPT-2}\cite{radford2019language}, \textsc{Qwen2.5-0.5B}\cite{Yang2024Qwen25TR}, and \textsc{Mistral-7B}\cite{jiang2023mistral} using reward uncertainty. \textsc{GPT-2} is paired with an ensemble reward model, while \textsc{Qwen2.5-0.5B} and \textsc{Mistral-7B} are fine-tuned with prompted reward models derived from \textsc{Gemini-1.5-flash}, \textsc{Gemini-2.0-flash}, and \textsc{Deepseek-V3}. For each model–reward model pair, we independently train eighty vanilla PPO policies and eighty variance-aware policies (Algorithm~\ref{alg:unified-ppo}), resulting in $\{\pi_1^i\}_{i=1}^{80}$ and $\{\pi_2^i\}_{i=1}^{80}$. We then estimate the reward distributions under the true reward model for three policies: (i) the reference policy, (ii) vanilla PPO fine-tuned policies, and (iii) variance-aware fine-tuned policies. Using Algorithm~\ref{alg:eval}, we compute empirical rewards ${\langle \pi_1^j, r^\ast \rangle}_{j=1}^{80}$ and ${\langle \pi_2^j, r^\ast \rangle}_{j=1}^{80}$, each consisting of 80 samples. We similarly resample from the reference policy $\pi_0$ to construct its empirical reward distribution. Figures~\ref{fig:GPT-2} and~\ref{fig:llm-reward} visualize the resulting distributions across different LLM–reward model pairs. Full details of algorithms, experiment hyperparameters, and hardware details are deferred to Appendix~\ref{sec:PPO}.

\begin{figure}[!htbp]
\centering
\includegraphics[width=0.7\linewidth]{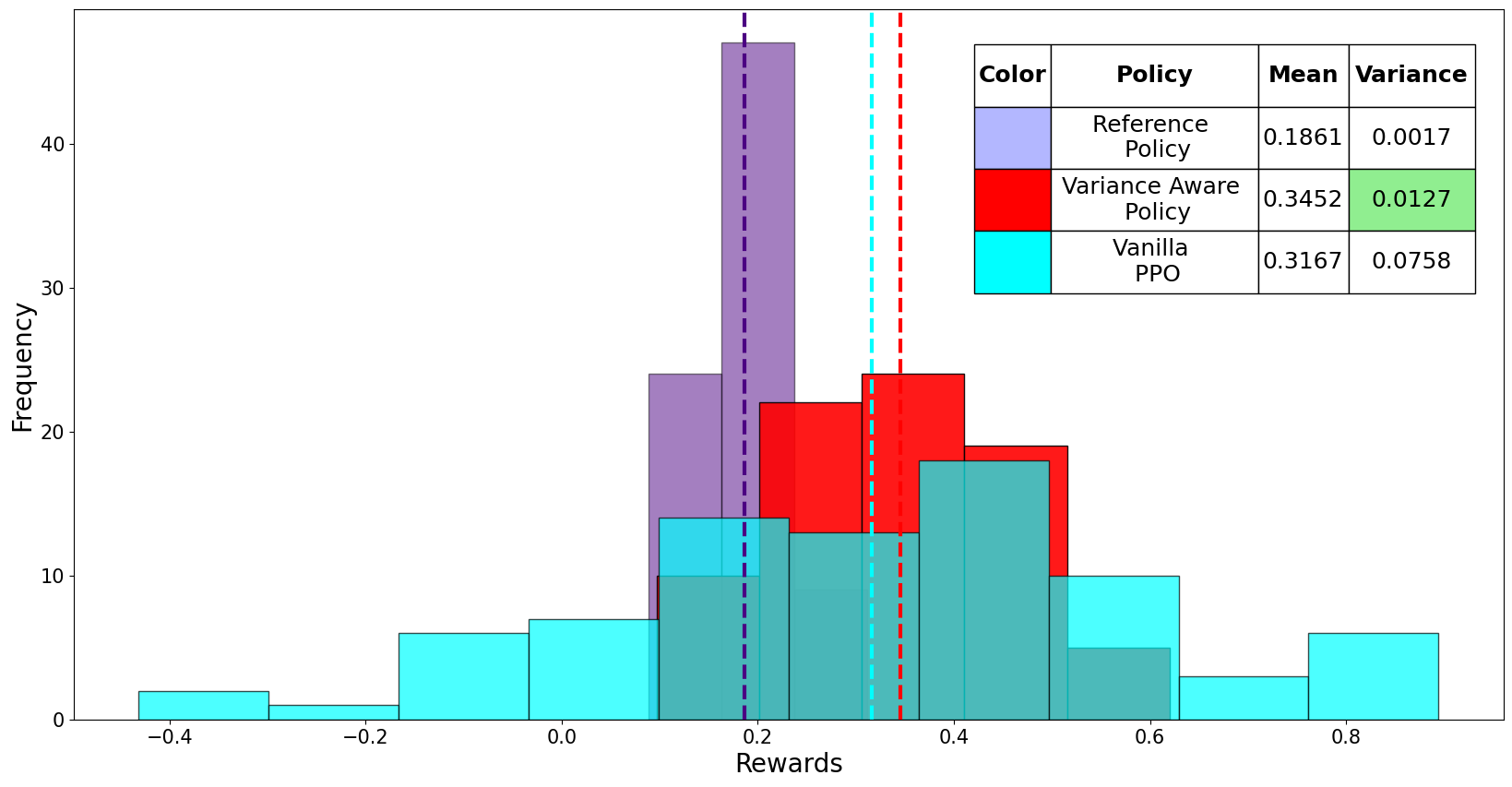}
    \captionof{figure}{\footnotesize{True empirical reward distributions for \textsc{GPT-2} fine-tuned with a custom ensemble. Shown are policies trained with the vanilla method, variance-aware method, and the reference policy. The true reward $r^\ast$ is based on a larger \textsc{Llama-3-8B} model.}}
    \label{fig:GPT-2}
\end{figure}

\paragraph{Observations: } Figure~\ref{fig:GPT-2} represents the empirical reward distributions for \textsc{GPT-2} with an ensemble reward model. The sub-figures in Figure \ref{fig:llm-reward} represent the empirical reward distributions for \textsc{Mistral-7B} and \textsc{Qwen2.5-0.5B} with the prompted reward models of \textsc{Gemini-1.5-flash, Gemini-2-flash} and \textsc{Deepseek-V3}. We observe that the variance-aware policies consistently enjoy lower variances than the vanilla policies. We summarize the results in the Table \ref{tab:reward_variance}. 
\begin{table}[!htbp]
    \centering
    \footnotesize
\begin{tabular}{|c|c|c|c|c|}
    \hline
    \textbf{Model} & \textbf{Reward Model} & \makecell{\textbf{Vanilla Policy} \\ $\sigma_1^2$} & \makecell{\textbf{Variance-Aware} \\ \textbf{Method} \\ $\sigma_2^2$} & \textbf{F-score $\frac{\sigma_1^2}{\sigma_2^2}$} \\
    \hline
    \textsc{Gpt-2} & \textsc{Ensemble Rm} & 0.076 & 0.012 & 6.33 \\
    \hline
    \multirow{3}{*}{\textsc{Mistral-7b}} 
        & \textsc{Gemini-1.5-flash} & 0.13 & 0.04 & 3.25 \\
        & \textsc{Gemini-2.0-flash} & 0.18 & 0.02 & 9.00 \\
        & \textsc{Deepseek-v3} & 0.06 & 0.02 & 3.00 \\
    \hline
    \multirow{3}{*}{\textsc{Qwen2.5-0.5b}} 
        & \textsc{Gemini-1.5-flash} & 0.05 & 0.02 & 2.50 \\
        & \textsc{Gemini-2.0-flash} & 0.09 & 0.02 & 4.50 \\
        & \textsc{Deepseek-v3} & 0.05 & 0.02 & 2.50 \\
    \hline
\end{tabular}
    \caption{\footnotesize{The table reports the variance of the reward distribution for different LLM–Reward Model pairs under the vanilla and variance-aware fine-tuning methods. It can be observed that the variance-aware method consistently yields statistically lower variance compared to the vanilla method. This is confirmed by the reported $F$-scores, all of which exceed the critical value $F_{0.975} = 1.616$ at the $95\%$ confidence level. Thus, we reject the null hypothesis of equal variances and conclude that the variance reduction achieved by the variance-aware method is statistically significant.}}
    \label{tab:reward_variance}
\end{table}
For samples of size 80, the critical $F$-value at the $95\%$ confidence level is approximately $1.616$. As shown in the last column of Table~\ref{tab:reward_variance}, the variance ratios for all model-reward model pairs exceed this threshold which indicates that the differences in variance between the vanilla and variance-aware methods are statistically significant. We note from Table \ref{tab:reward_means} that the means of the two methods are statistically identical at the $95\%$ confidence interval. 

\begin{table}[!htbp]
\begin{center}
    \centering
\hspace*{-1.5em}
    \footnotesize
    \begin{tabular}{|c|c|c|c|c|c|}
        \hline
        \textbf{Model} & \textbf{Reward Model} & \makecell{\textbf{Vanilla Policy} \\ $\mu_1$} & \makecell{\textbf{Variance-Aware} \\ \textbf{Method} \\ $\mu_2$} & \makecell{\textbf{t-score} \\ $\left|\frac{\mu_1 - \mu_2}{\sqrt{\sigma_1^2/n + \sigma_2^2/n}}\right|$} & $t_{crit} = t_{0.975}$\\
        \hline
        \textsc{Gpt-2} & \textsc{Ensemble Rm} & 0.31 & 0.34 & 0.86 & 1.98\\
        \hline
        \multirow{3}{*}{\textsc{Mistral-7b}} 
            & \textsc{Gemini-1.5-flash} & 29.21 & 29.25 & 0.92 & 1.98\\
            & \textsc{Gemini-2.0-flash} & 33.09 & 33.06 & 0.71 & 1.99\\
            & \textsc{Deepseek-V3} & 45.97 & 45.98 & 0.39 & 1.98 \\
        \hline
        \multirow{3}{*}{\textsc{Qwen2.5-0.5b}} 
            & \textsc{Gemini-1.5-flash} & 28.37 & 28.36 & 0.13 & 1.98\\
            & \textsc{Gemini-2.0-flash} & 32.12 & 32.13 & 0.42 & 2.00\\
            & \textsc{Deepseek-V3} & 46.01 & 46.00 & 0.28 & 1.98\\
        \hline
\end{tabular}
    \caption{\footnotesize{The table reports the means of the reward distributions for different LLM–Reward Model pairs under the vanilla and variance-aware fine-tuning methods. To assess whether the differences in means are statistically significant, we conduct Welch's $t$-test \cite{welch_t_1947}. The resulting $t$-scores and the critical $t$-value at the $95\%$ confidence level are shown in Columns~5 and~6, respectively. As observed, all computed $t$-scores fall well below the critical value, indicating that we fail to reject the null hypothesis. This suggests that the means of the reward distributions under the two fine-tuning methods are not statistically different.}}
    \label{tab:reward_means}
\end{center}
\end{table}

We define the empirical risk as the probability that a policy underperforms the reference policy, i.e., $\mathbb{P}(\hat{R} \leq \pi_0^\top r^\ast)$. As shown in Table~\ref{tab:risk_profile}, the estimated risk for the variance-aware policies is statistically lower than the vanilla polies at the 95\% confidence level~\citep{montgomery2014applied}. \begin{table}[!htbp]
    \centering
    \footnotesize
    \begin{tabular}{|c|c|c|c|c|}
    \hline
    \textbf{Model} & \textbf{Reward Model} & \makecell{\textbf{Variance Aware} \\ \textbf{Risk} $\hat p_2$} & \makecell{\textbf{Vanilla Policy} \\ \textbf{Risk} $\hat p_1$ } & $p_{\text{value}}$ \\
    \hline
    \textsc{Gpt-2} & \textsc{Ensemble Rm} & 0.05 & 0.29 & $3\times10^{-5}$ \\
    \hline
    \multirow{3}{*}{\textsc{Mistral-7b}} 
        & \textsc{Gemini-1.5-flash} & 0.41 & 0.48 & $0.0282$ \\
        & \textsc{Gemini-2.0-flash} & 0.07 & 0.36 & $3\times10^{-6}$ \\
        & \textsc{Deepseek-v3} & 0.06 & 0.18 & $0.014$ \\
    \hline
    \multirow{3}{*}{\textsc{Qwen2.5-0.5b}} 
        & \textsc{Gemini-1.5-flash} & 0.24 & 0.29 & $0.038$ \\
        & \textsc{Gemini-2.0-flash} & 0.16 & 0.39 & $0.0007$ \\
        & \textsc{Deepseek-v3} &  0.24 & 0.29 & $0.042$ \\
    \hline
\end{tabular}
    \caption{\footnotesize{The table reports the empirical risk for different LLM–Reward Model pairs under the vanilla and variance-aware fine-tuning methods. The variance-aware method consistently yields statistically lower risk than the vanilla method. This is confirmed by a one-sided two-proportion Z-test. The reported $p$-values are below the critical threshold $\alpha = 0.05$, allowing us to reject the null hypothesis that $p_2 \geq p_1$ in favor of the alternative $p_2 < p_1$ at the 95\% confidence level.}}
    \label{tab:risk_profile}
\end{table}

\begin{figure}[!htbp]
\centering
\begin{minipage}[t]{0.5\linewidth}
    \includegraphics[width=\linewidth]{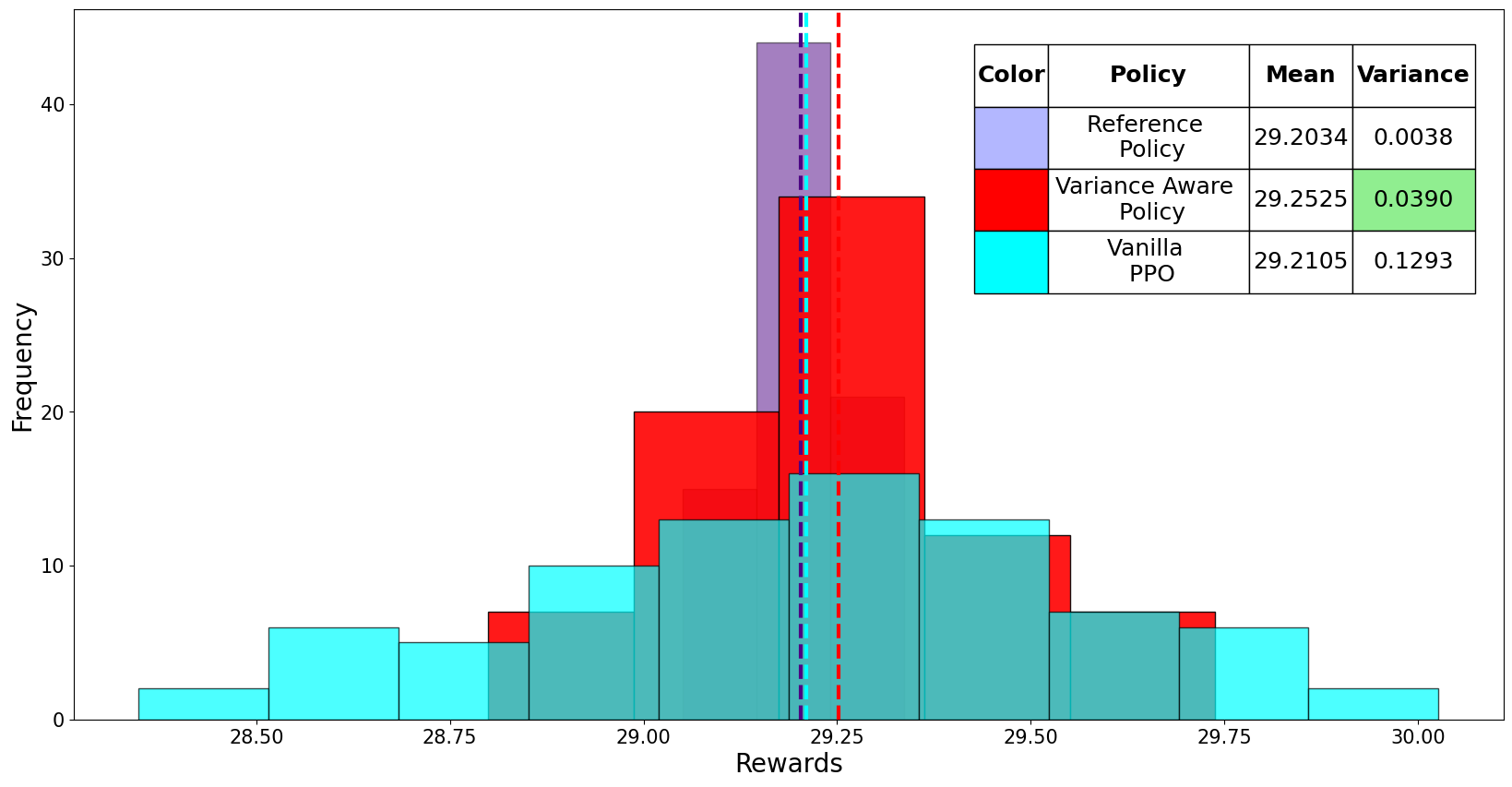}
    \caption*{\scriptsize (a) \textsc{Mistral-7b} finetuned with \textsc{Gemini-1.5-flash}}
\end{minipage}%
\hfill
\begin{minipage}[t]{0.5\linewidth}
    \includegraphics[width=\linewidth]{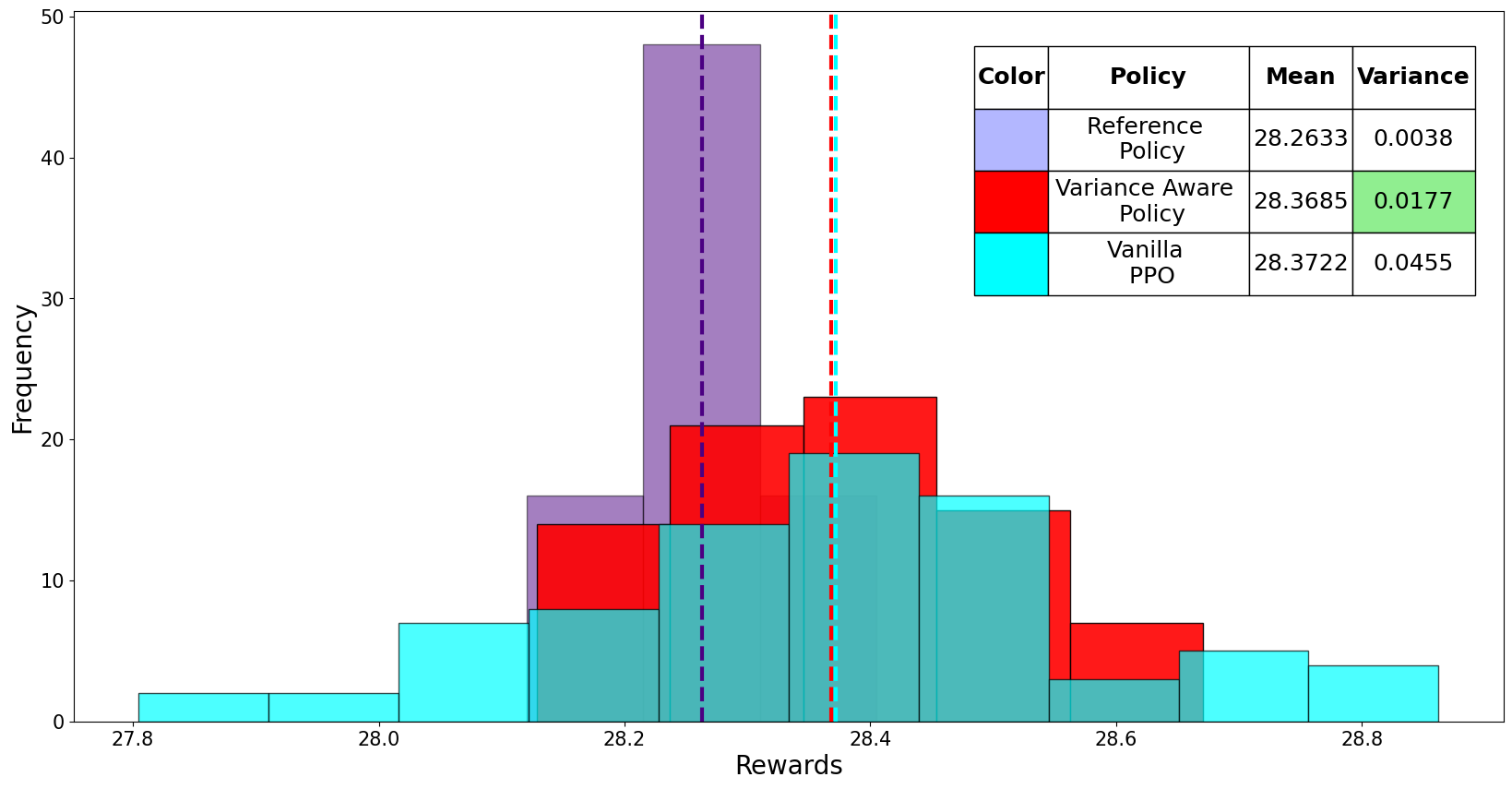}
    \caption*{\scriptsize (b) \textsc{Qwen2.5-0.5b} finetuned with \textsc{Gemini-1.5-flash}}
\end{minipage}
\begin{minipage}[t]{0.5\linewidth}
    \includegraphics[width=\linewidth]{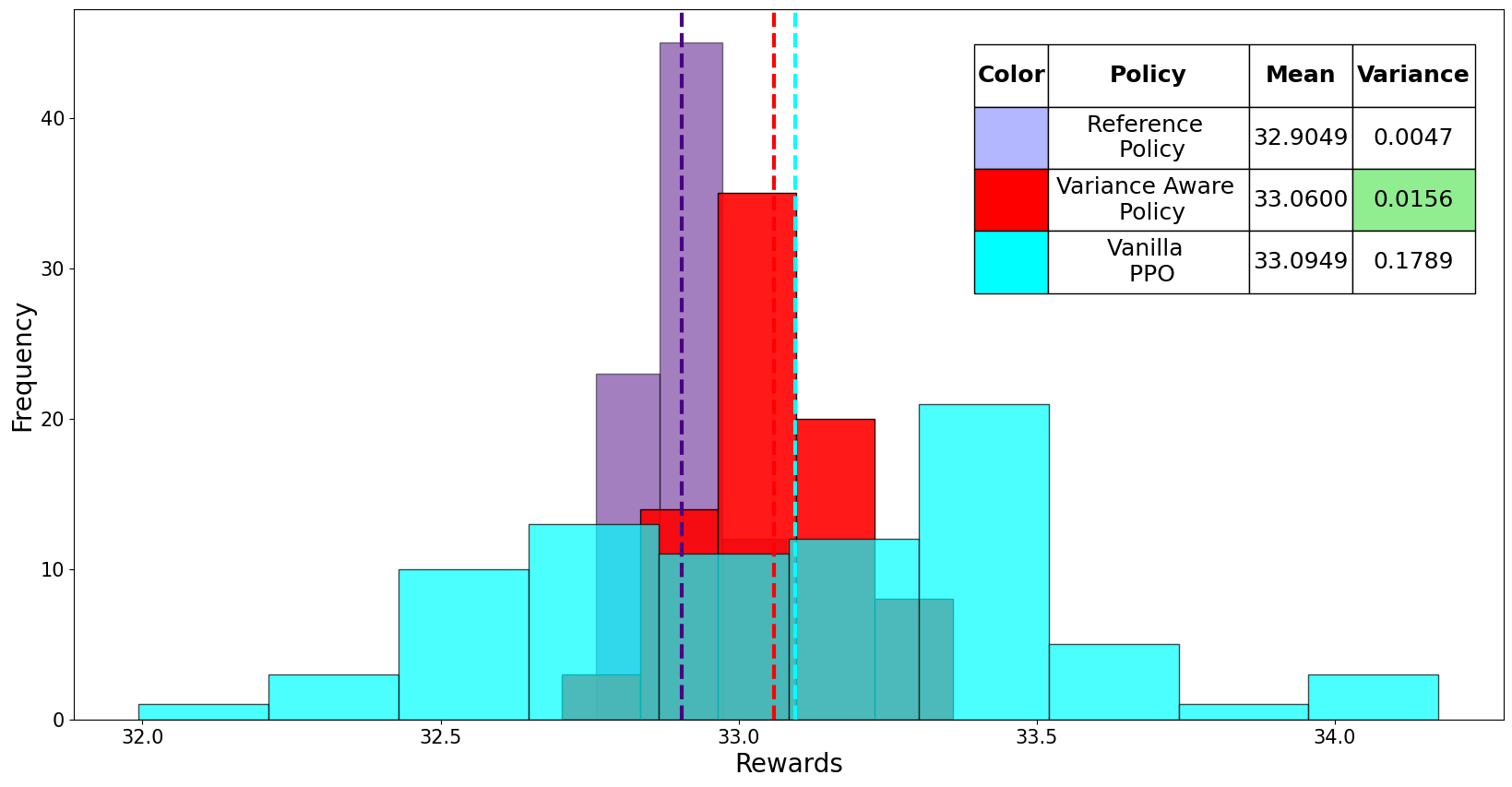}
    \caption*{\scriptsize (c) \textsc{Mistral-7b} finetuned with \textsc{Gemini-2.0-flash}}
\end{minipage}%
\hfill
\begin{minipage}[t]{0.5\linewidth}
    \includegraphics[width=\linewidth]{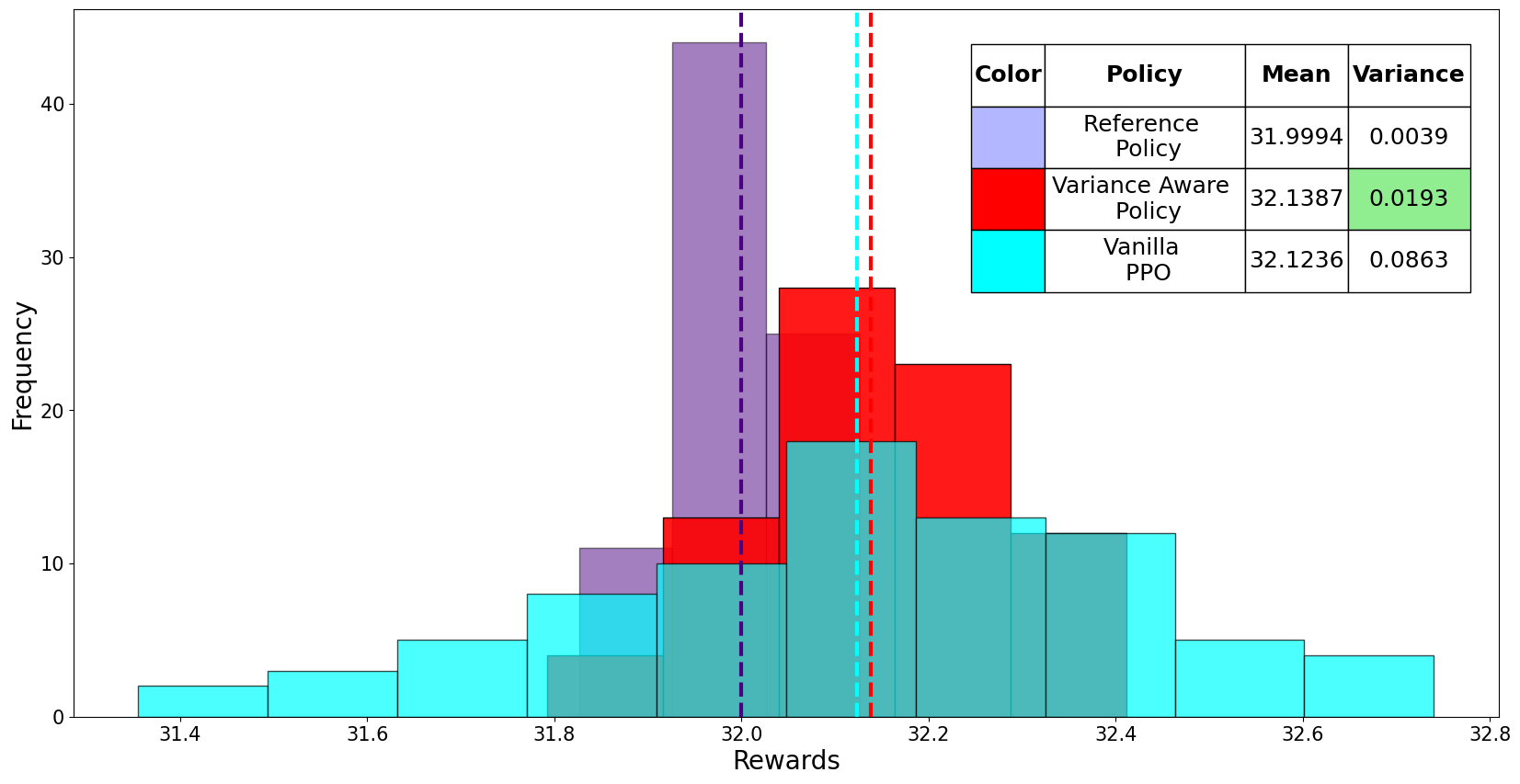}
    \caption*{\scriptsize (d) \textsc{Qwen2.5-0.5b} finetuned with \textsc{Gemini-2.0-flash}}
\end{minipage}
\begin{minipage}[t]{0.5\linewidth}
    \includegraphics[width=\linewidth]{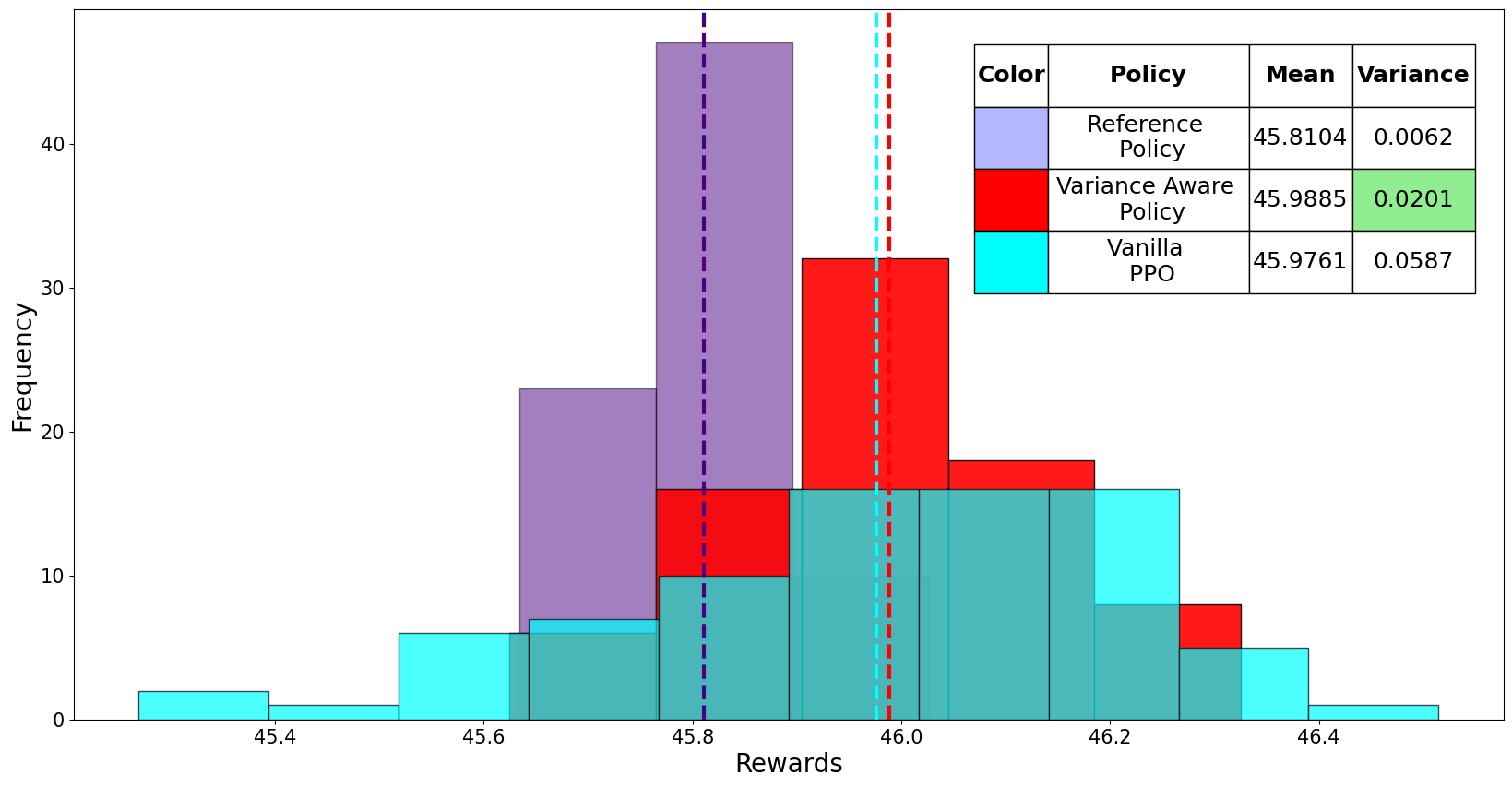}
    \caption*{\scriptsize (e) \textsc{Mistral-7b} finetuned with \textsc{Deepseek-v3}}
\end{minipage}%
\hfill
\begin{minipage}[t]{0.5\linewidth}
    \includegraphics[width=\linewidth]{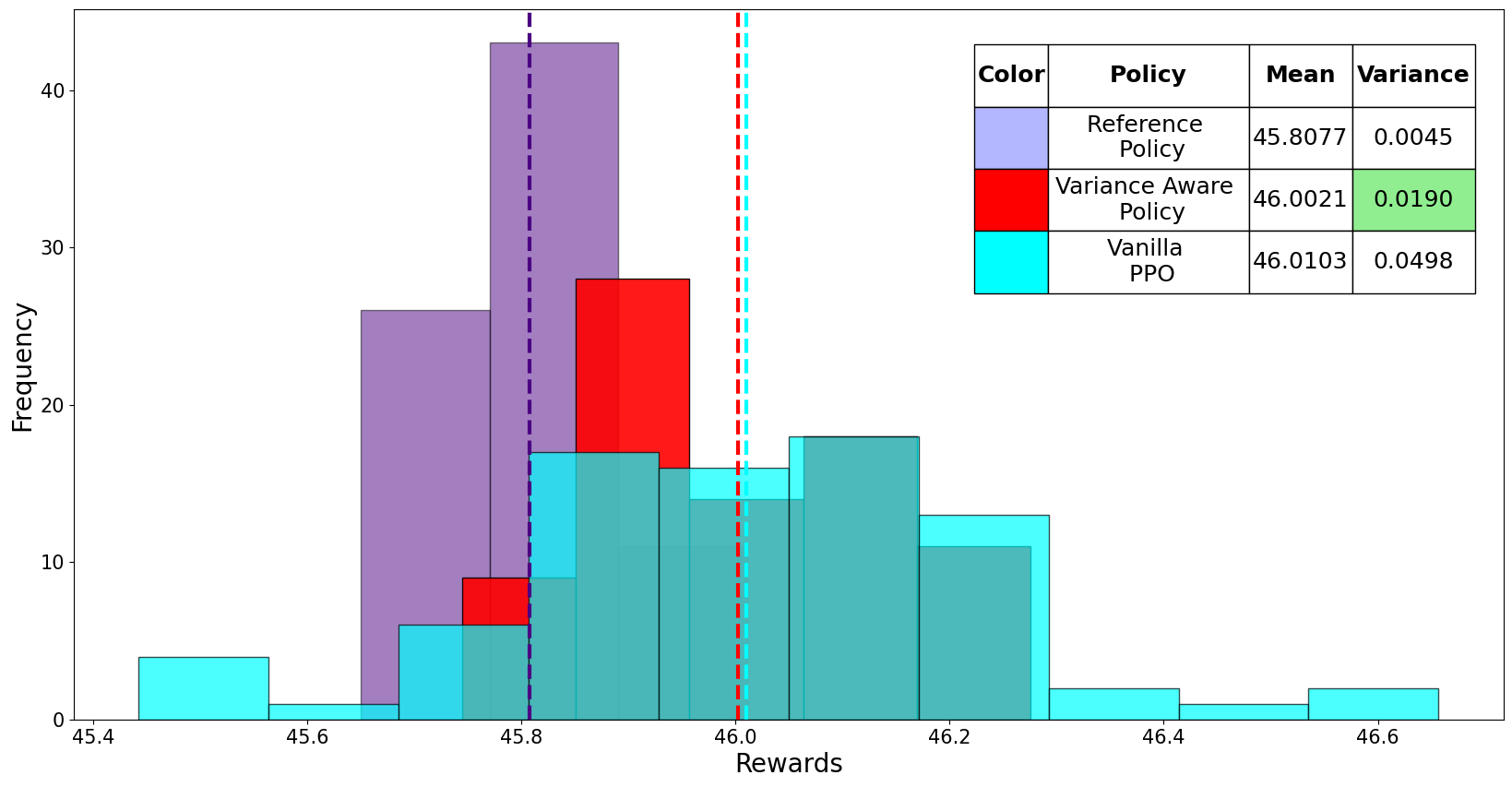}
    \caption*{\scriptsize (f) \textsc{Qwen2.5-0.5b} finetuned with \textsc{Deepseek-v3}}
\end{minipage}
\caption{\footnotesize
Empirical reward distributions for \textsc{Mistral} and \textsc{Qwen} models fine-tuned using prompted reward models: \textsc{Gemini-1.5-flash}, \textsc{Gemini-2.0-flash}, and \textsc{Deepseek-v3}. Results are shown for vanilla policies, variance-aware policies, and reference policies. The noisy reward used during fine-tuning, $\hat{R}$, is sampled uniformly from the interval $[a, b]$ with $1 \leq a < b \leq 100$, and the true reward $r^\ast$ used for evaluation is computed as the midpoint $(a + b)/2$.
}
\label{fig:llm-reward}
\end{figure}

\paragraph{Effect of Variability in Reward Model Variance:} 
We examine how the variability in reward model variance impacts the effectiveness of variance-aware fine-tuning. We fine-tune \textsc{Qwen2.5-0.5B} using the prompted \textsc{Gemini-2.0-flash} reward model, varying the reward range from $[1, 100]$ to $[1, 10]$. Narrower ranges lead to lower variability in the standard deviations $\sigma(x, y)$ across prompt–response pairs, as shown in Figure~\ref{fig:variability-of-variance}~(e). Fine-tuning with these reward distributions reveals that as variance variability decreases, the benefit of the variance-aware method diminishes—evidenced by a declining variance ratio compared to the vanilla method. Results are summarized in Table~\ref{tab:my_label}.

\begin{figure}[!htbp]
\centering
\begin{minipage}[t]{0.5\linewidth}
    \includegraphics[width=\linewidth]{Figures/Quen-Gemini-2.png}
    \captionof*{figure}{\scriptsize (a) Finetuned with rewards bounded between $1 \leq \hat{R} \leq 100$}
\end{minipage}%
\hfill
\begin{minipage}[t]{0.5\linewidth}
    \includegraphics[width=\linewidth]{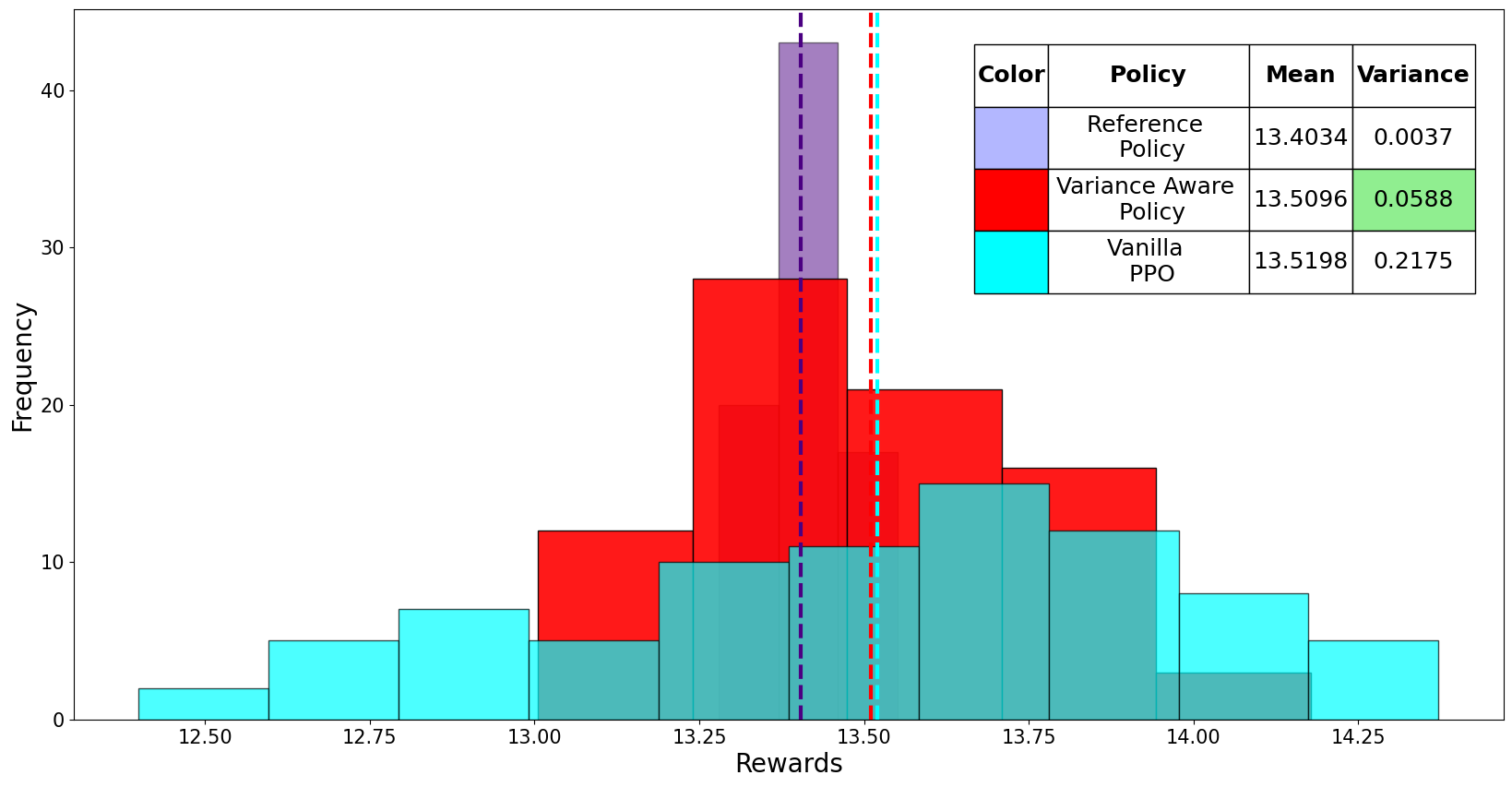}
    \caption*{\scriptsize (b) Finetuned with rewards bounded between $1 \leq \hat{R} \leq 50$}
\end{minipage}
\begin{minipage}[t]{0.5\linewidth}
    \includegraphics[width=\linewidth]{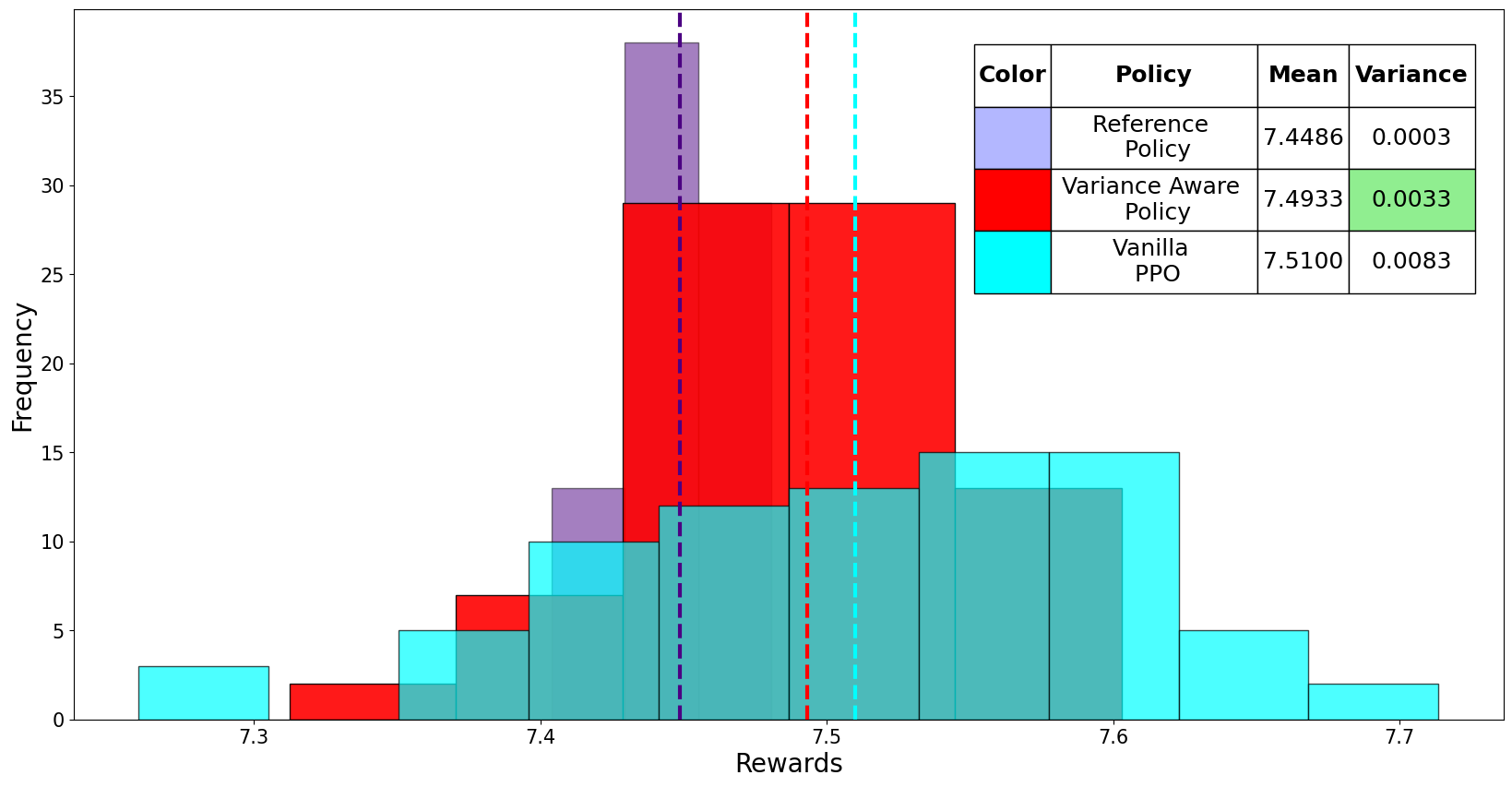}
    \caption*{\scriptsize (c) Finetuned with rewards bounded between $1 \leq \hat{R} \leq 20$}
\end{minipage}%
\hfill
\begin{minipage}[t]{0.5\linewidth}
    \includegraphics[width=\linewidth]{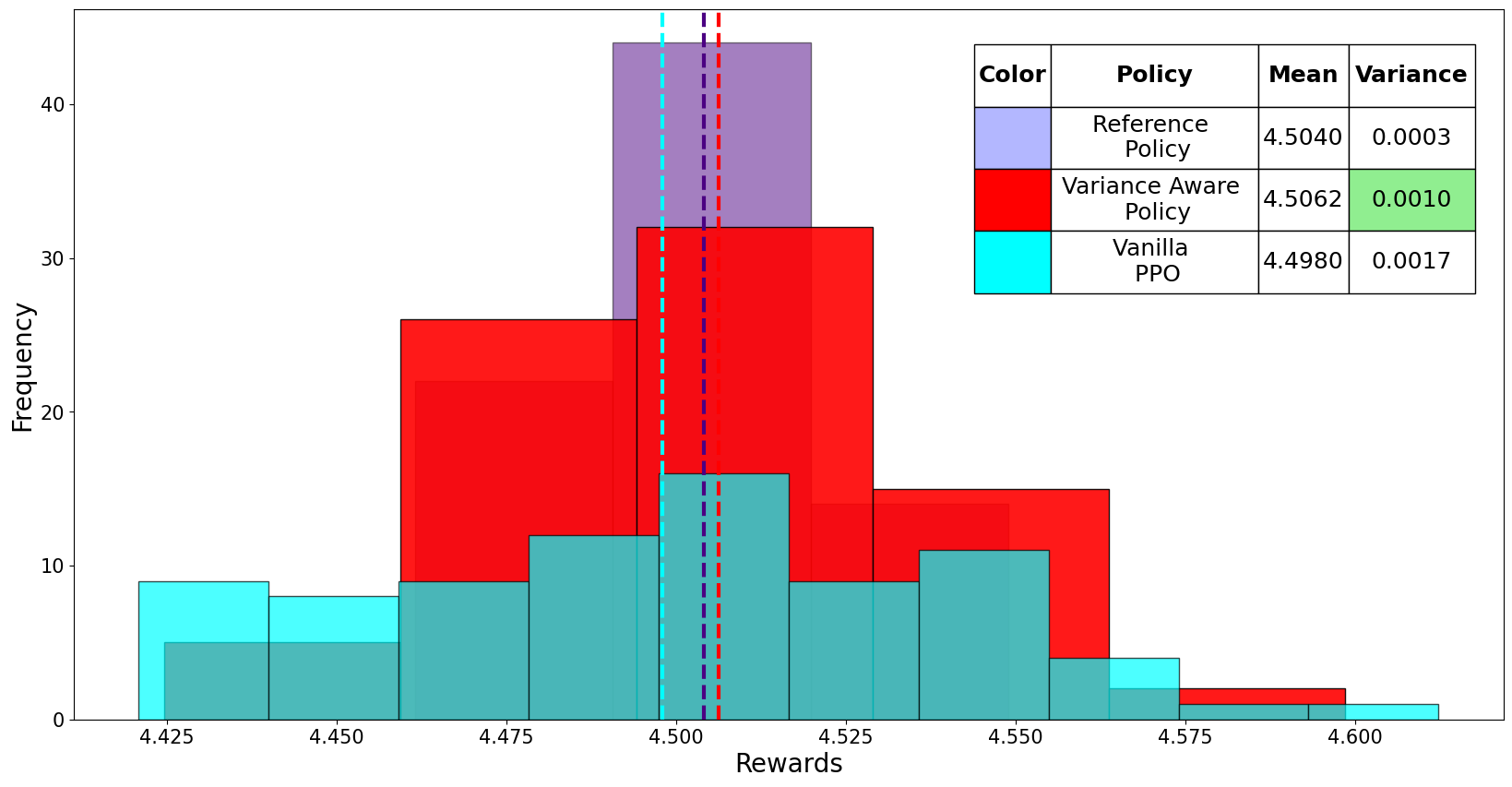}
    \caption*{\scriptsize (d) Finetuned with rewards bounded between $1 \leq \hat{R} \leq 10$}
\end{minipage}
\begin{minipage}[t]{0.70\linewidth}
    \includegraphics[width=\linewidth]{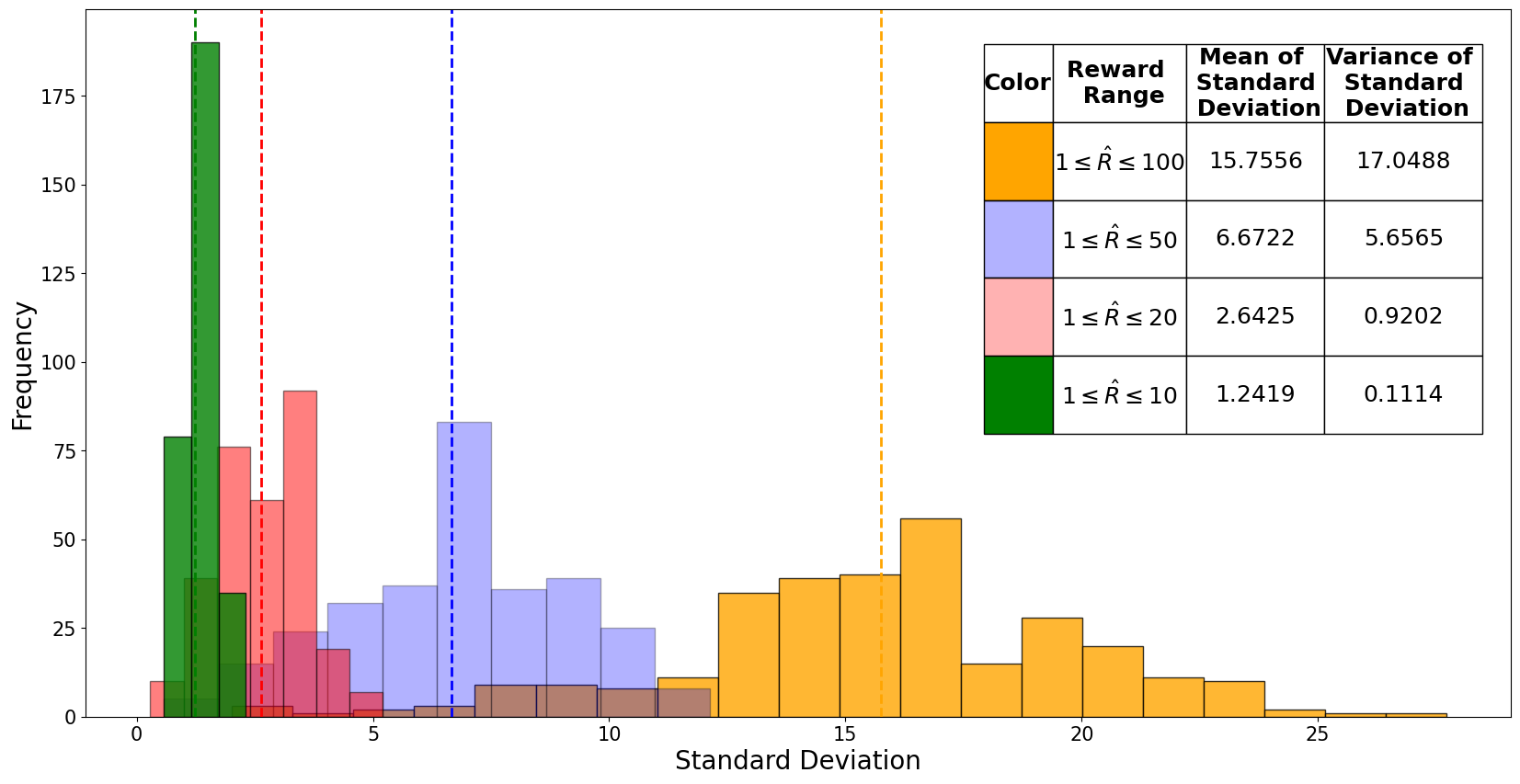}
    \captionof*{figure}{\footnotesize (e) The figure shows the distribution of the standard deviation of the rewards, $\sigma(x, y)$, computed across the entire dataset under different constraints on the allowable reward range. As the range of possible reward values decreases, the variability in the rewards assigned to different prompt–response pairs $(x, y)$ also diminishes. This leads to a reduction in the overall variability of the variances.}
\end{minipage}%
\caption{\footnotesize Empirical reward distributions for \textsc{Qwen2.5-0.5B} fine-tuned using the prompted \textsc{Gemini-2.0-flash} reward model under different settings of the reward range. We show the empirical reward distributions for the vanilla policies, variance-aware policies, and reference policy. The noisy reward $\hat{R}$ is sampled uniformly from $[a, b]$ with $1 \leq a < b \leq R$, where $R$ varies across $\{100, 50, 20, 10\}$. The true reward $r^\ast$ used in evaluation for each setting is defined as the midpoint $(a + b)/2$.}
\label{fig:variability-of-variance}
\end{figure}

\begin{table}[!htbp]
    \centering
    \footnotesize
    \begin{tabular}{|c|c|c|c|c|}
        \hline
        \vspace{0.1mm}
        \makecell{\textbf{Reward Range}} & \makecell{\textbf{Variance of std dev.} \\ \textbf{across prompts-responses}} & \makecell{\textbf{Vanilla Policy} \\ $\sigma_1^2$} & \makecell{\textbf{Variance-Aware Policy} \\ $\sigma_2^2$} & $\sigma_1^2/\sigma_2^2$ \\
        \hline
        $[1, 100]$ & 17 & 0.086 & 0.019 & 4.479 \\
        $[1, 50]$ & 5.6 & 0.2175 & 0.058 & 3.70 \\
        $[1, 20]$ & 0.92 & 0.0083 & 0.0033 & 2.55 \\
        $[1, 10]$ & 0.11 & 0.0017 & 0.0010 & 1.79 \\
        \hline
    \end{tabular}
    \caption{
\footnotesize
Comparison of the empirical reward variances under the vanilla and bariance-aware fine-tuning methods for \textsc{Qwen2.5-0.5B} with the prompted \textsc{Gemini-2.0-flash} reward model, across different reward ranges. The second column reports the variance of the standard deviation of rewards $\sigma(x,y)$ across prompt–response pairs. As the reward range narrows, the overall variability of the reward signal decreases, reducing the relative effectiveness of the variance-aware method, as seen in the decreasing ratio $\sigma_1^2/\sigma_2^2$.
}
    \label{tab:my_label}
\end{table}

\newpage
\bibliographystyle{plainnat}
\bibliography{References}

\newpage
\appendix
\section{Proofs}
\label{sec:Proofs}
\noisyreward*
\surrogate*
\begin{proof}
The result follows from a standard self-normalizing bound for Gaussian random variables. Specifically, for any $\delta > 0$, the following holds with probability at least $1 - \delta$:
\[
\left\| \hat{R} - r^* \right\|_{\Sigma^{-1}} \leq \sqrt{\abs{\cX}\abs{\cY} \cdot \ln\left(1/\delta\right)},
\]
since $\left\| \hat{R} - r^* \right\|_{\Sigma^{-1}}$ is the self-normalized Euclidean norm of a standard Gaussian vector in $\abs{\cX}\abs{\cY}$ dimensions.

Applying the Cauchy–Schwarz inequality for any $d \in \mathcal{D}$ yields:
\[
\left| \langle d, \hat{R} - r^* \rangle \right| \leq \|d\|_\Sigma \cdot \left\| \hat{R} - r^* \right\|_{\Sigma^{-1}}.
\]

Substituting the high-probability bound, we obtain:
\[
\left| \langle d, \hat{R} - r^* \rangle \right| \leq \|d\|_\Sigma \cdot \sqrt{\abs{\cX}\abs{\cY} \cdot \ln\left(1/\delta\right)}.
\]

This completes the proof.
\end{proof}

\risk*
\begin{proof}
Both optimization problems~\eqref{eq:pi1} and~\eqref{eq:pi2} involve maximizing a linear function over a convex domain. Hence, the maximum is achieved at the boundary of the feasible region, allowing us to replace the inequality constraints with equality constraints and apply the method of Lagrange multipliers.

\textbf{Variance-Aware Policy $\pi_2$.}  
The Lagrangian for the problem~\eqref{eq:pi2} is:
\[
\pi_2 = \arg\max_\pi \left[ \hat{R}^\top \pi - \beta (\pi - \pi_0)^\top \Sigma (\pi - \pi_0) \right],
\]
where $\beta$ is the Lagrange multiplier associated with the covariance-weighted $\ell_2$ constraint. Solving this yields:
\begin{equation}
\label{eq:pi2_solution}
\pi_2 = \pi_0 + \frac{1}{2\beta} \Sigma^{-1} \hat{R}.
\end{equation}

To enforce the constraint $\|\pi_2 - \pi_0\|_\Sigma^2 = \tilde{\epsilon}$, we compute $\beta$ as:
\[
\beta = \frac{1}{2} \sqrt{ \frac{ \hat{R}^\top \Sigma^{-1} \hat{R} }{ \tilde{\epsilon} } }.
\]
Substituting into~\eqref{eq:pi2_solution}, we get:
\[
\pi_2 = \pi_0 + \sqrt{ \frac{ \tilde{\epsilon} }{ \hat{R}^\top \Sigma^{-1} \hat{R} } } \Sigma^{-1} \hat{R}.
\]

\textbf{Variance-Unaware Policy $\pi_1$.}  
Similarly, solving~\eqref{eq:pi1} gives:
\[
\pi_1 = \pi_0 + \sqrt{ \frac{ \epsilon }{ \hat{R}^\top \hat{R} } } \hat{R}.
\]

\textbf{Expected True Rewards.}  
The expected true reward under $\pi_1$ is:
\[
\pi_1^\top r^* = \pi_0^\top r^* + \sqrt{ \frac{ \epsilon }{ \hat{R}^\top \hat{R} } } \hat{R}^\top r^*,
\]
and under $\pi_2$:
\[
\pi_2^\top r^* = \pi_0^\top r^* + \sqrt{ \frac{ \tilde{\epsilon} }{ \hat{R}^\top \Sigma^{-1} \hat{R} } } \hat{R}^\top \Sigma^{-1} r^*.
\]

Each policy underperforms relative to $\pi_0$ if its improvement term is non-positive. Thus, underperformance occurs when:
\begin{align*}
\pi_1: \quad &\hat{R}^\top r^* \leq 0, \\
\pi_2: \quad &\hat{R}^\top \Sigma^{-1} r^* \leq 0.
\end{align*}

Since $\hat{R} \sim \mathcal{N}(r^*, \Sigma)$, we compute:
\begin{align*}
\hat{R}^\top r^* &\sim \mathcal{N}\left( \| r^* \|^2, \; r^{*\top} \Sigma r^* \right), \\
\hat{R}^\top \Sigma^{-1} r^* &\sim \mathcal{N}\left( r^{*\top} \Sigma^{-1} r^*, \; r^{*\top} \Sigma^{-1} r^* \right).
\end{align*}

Hence, the underperformance probabilities are:
\begin{align*}
\mathbb{P}\left( \pi_1^\top r^* \leq \pi_0^\top r^* \right) &= \Phi\left( -\frac{ \| r^* \|^2 }{ \sqrt{ r^{*\top} \Sigma r^* } } \right), \\
\mathbb{P}\left( \pi_2^\top r^* \leq \pi_0^\top r^* \right) &= \Phi\left( -\sqrt{ r^{*\top} \Sigma^{-1} r^* } \right),
\end{align*}
where $\Phi(\cdot)$ denotes the standard normal cumulative distribution function.

\textbf{Comparing the Risks.}  
Using Cauchy–Schwarz:
\[
\|r^*\|^2 = r^{*\top} \Sigma^{-1/2} \Sigma^{1/2} r^* \leq \left\| \Sigma^{-1/2} r^* \right\| \cdot \left\| \Sigma^{1/2} r^* \right\| = \sqrt{ r^{*\top} \Sigma^{-1} r^* } \cdot \sqrt{ r^{*\top} \Sigma r^* }.
\]
Thus:
\[
-\frac{ \| r^* \|^2 }{ \sqrt{ r^{*\top} \Sigma r^* } } \geq -\sqrt{ r^{*\top} \Sigma^{-1} r^* },
\]
and by monotonicity of $\Phi(\cdot)$:
\[
\mathbb{P}\left( \pi_2^\top r^* \leq \pi_0^\top r^* \right) \leq \mathbb{P}\left( \pi_1^\top r^* \leq \pi_0^\top r^* \right).
\]
\end{proof}

\sharpe*
\begin{proof}
The constrained optimization problem can be converted into an unconstrained one via the method of Lagrange multipliers. Introducing $\beta > 0$ yields:
\[
\arg\max_{\pi} \mathbb{E}_{x \sim \rho_{\cX},\, y \sim \pi(\cdot|x)} \left[ \frac{ \hat{R}(x, y) }{ \beta \sigma^2(x, y) } - \ln \frac{ \pi(y|x) }{ \pi_0(y|x) } \right].
\]
This corresponds to a form of regularized exponential tilting, where the optimal policy becomes:
\[
\pi^*(y|x) \propto \pi_0(y|x) \exp\left( \frac{ \hat{R}(x, y) }{ \beta \sigma^2(x, y) } \right),
\]
which aligns with the notion of a Sharpe ratio: expected return scaled by risk. A full derivation is provided in~\cite[Appendix A.1]{rafailov2024direct}.
\end{proof}

\section{Variability in Reward Models}
\label{sec:reward_model}

\subsection{Ensemble Reward Models: Finetuning and Observations}
In this section, we describe the process of reward modeling using the \textsc{Gemma-2b-it} model \citep{team2024gemma}, an instruction-tuned variant of the foundational \textsc{Gemma-2b}. Our methodology leverages an ensemble of 10 independently trained reward models to estimate the variance in rewards assigned to identical prompt–response pairs. This ensemble-based approach enables a more principled estimation of uncertainty in reward predictions and facilitates the analysis of model variability, even among models trained on the same data.

The sections that follow detail our methodology for training the ensemble of reward models, and key observations drawn from the ensemble's performance on multiple benchmark tasks.

\paragraph{Dataset}
To train our reward models, we use an open-source preference dataset introduced by \citet{dong2024rlhf}, available on HuggingFace\footnote{\href{https://huggingface.co/datasets/weqweasdas/preference_dataset_mix2}{huggingface.co/weqweasdas/preference\_dataset\_mix2}}. This curated dataset comprises approximately 50{,}000 labeled preference pairs and is constructed by merging several widely-used open datasets. The constituent datasets include \textsc{HH-RLHF} \citep{bai2022training}, \textsc{SHP} \citep{ethayarajh2022understanding}, \textsc{HelpSteer} \citep{wang2023helpsteer}, \textsc{PKU-SafeRLHF} \citep{ji2024beavertails}, \textsc{UltraFeedback} \citep{cui2023ultrafeedback}, \textsc{UltraInteract} \citep{yuan2024advancing}, \textsc{Distilabel-Capybara} \citep{daniele2023suphavadeeprasit}, and \textsc{Distilabel-Orca3} \citep{lian2023openorca}.

To ensure data quality, the combined dataset is preprocessed to remove approximately 10\% of low-quality samples. The resulting dataset contains prompt–response preference pairs, where each prompt is associated with one preferred and one rejected response. These pairwise labels serve as the ground truth for training the ensemble reward models.

The diversity and breadth of the included datasets make this collection well-suited for training robust reward models that generalize across a range of domains and tasks. For further details on dataset construction and preprocessing, we refer the reader to \citet{dong2024rlhf}.

\paragraph{Methodology}  
We use the \textsc{Gemma-2b-it} model \citep{gemma-2b-it} as the backbone for our reward models. As an instruction-tuned variant of the foundational \textsc{Gemma-2b}, it is well-suited for reward modeling tasks due to its alignment with human instructions. The full model, including a scalar reward head, occupies approximately 9.34~GB on disk. Since we employ an ensemble of 10 independently parameterized reward models, the total storage requirement scales to roughly 90~GB.

To accelerate training and improve memory efficiency, we adopt the following methodology:
\begin{itemize}
    \item \textbf{Initial Training:} We begin by training a single instance of the full \textsc{Gemma-2b-it} model with a scalar reward head—a linear layer of size $2048 \times 1$—on the preference dataset. We apply early stopping once the loss reaches $0.3$ to balance model capacity and mitigate overfitting.

    \item \textbf{Parallel Reward Heads:} After partial training, we attach nine additional reward heads in parallel to the original, following the setup of \citet{zhang2024improving}. Each head has identical dimensions ($2048 \times 1$), resulting in a 10-dimensional output vector in a single forward pass—one scalar reward per ensemble member.

    \item \textbf{Freezing the Foundation Model:} To reduce computational costs, we freeze the pre-trained base of \textsc{Gemma-2b-it} and train only the reward heads. This setup simulates training 10 independent reward models in parallel while sharing the same foundation. We employ an additive loss function:
    \[
        \text{loss} = \sum_{i=1}^{10} \ell(\theta_i),
    \]
    where each $\theta_i$ denotes the parameters of the $i$-th reward head. This ensures each head is trained independently, while retaining efficiency.
\end{itemize}

By freezing the foundational layers and isolating training to the reward heads, we significantly reduce both computational and storage overhead. The final ensemble model remains compact, occupying approximately 9.34~GB on disk, with only 20,480 trainable parameters across all reward heads.

\begin{figure*}[t]
    \makebox[1.02\textwidth][c]{%
        \begin{minipage}{1.1\textwidth} 
            \centering
            \begin{subfigure}[\footnotesize{\textbf{Chat}}]{\includegraphics[width=0.25\linewidth]{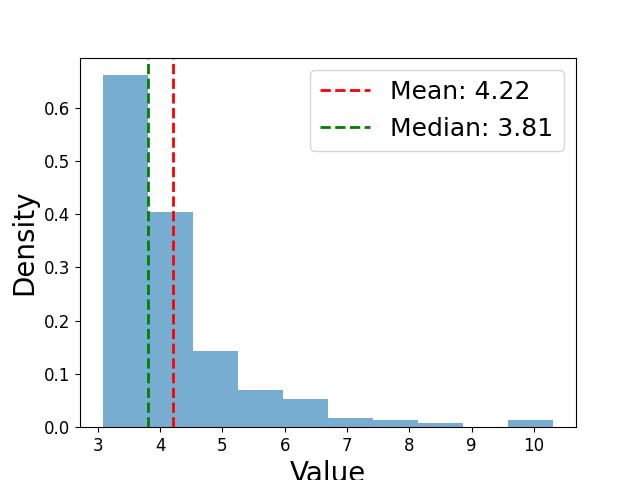}}   \end{subfigure}
            \hspace*{-0.3cm}
            \begin{subfigure}[\footnotesize{\textbf{Chat Hard}}]{\includegraphics[width=0.25\linewidth]{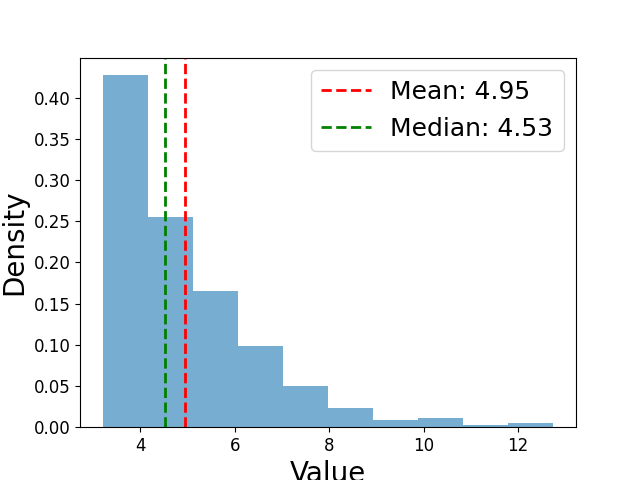}}   
            \end{subfigure}
            \hspace*{-0.3cm}
            \begin{subfigure}[\footnotesize{\textbf{Safety}}]{\includegraphics[width=0.25\linewidth]{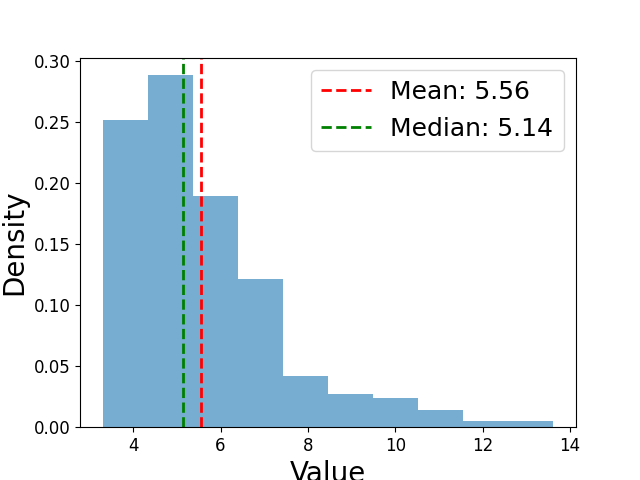}}  \end{subfigure}
            \hspace*{-0.3cm}
            \begin{subfigure}[\footnotesize{\textbf{Reasoning}}]{\includegraphics[width=0.25\linewidth]{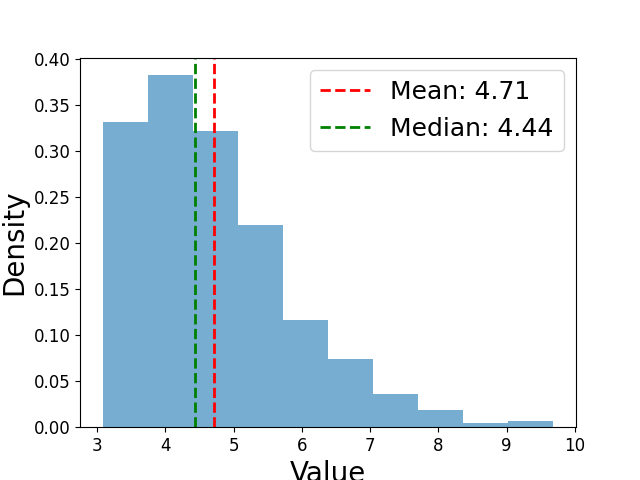}}   
            \end{subfigure} \\
            \begin{subfigure}[\footnotesize{\textbf{Chat}}]{\includegraphics[width=0.25\linewidth]{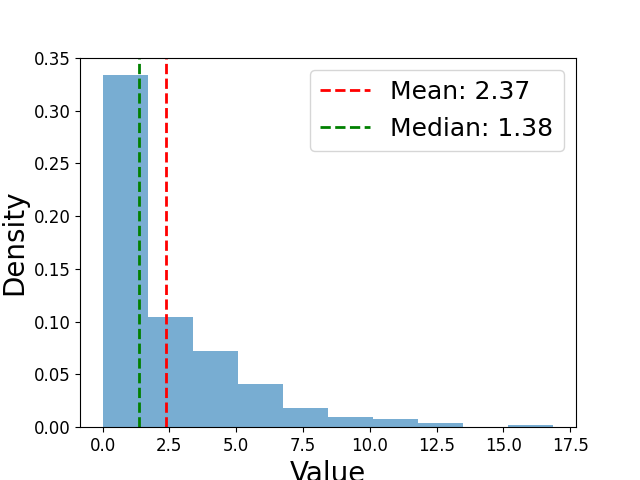}}   \end{subfigure}
            \hspace*{-0.3cm}
            \begin{subfigure}[\footnotesize{\textbf{Chat Hard}}]{\includegraphics[width=0.25\linewidth]{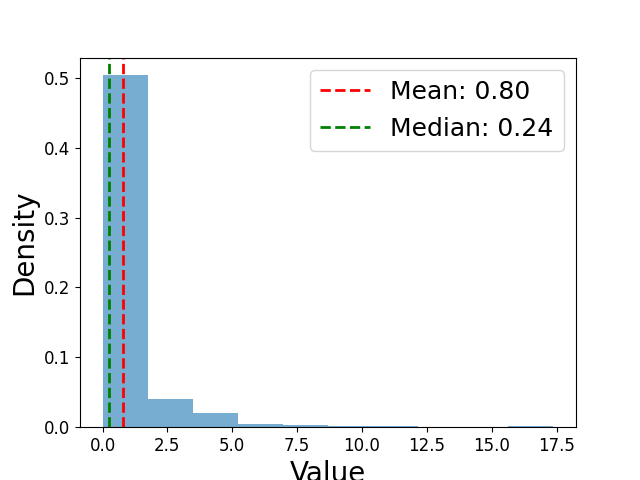}}   
            \end{subfigure}
            \hspace*{-0.3cm}
            \begin{subfigure}[\footnotesize{\textbf{Safety}}]{\includegraphics[width=0.25\linewidth]{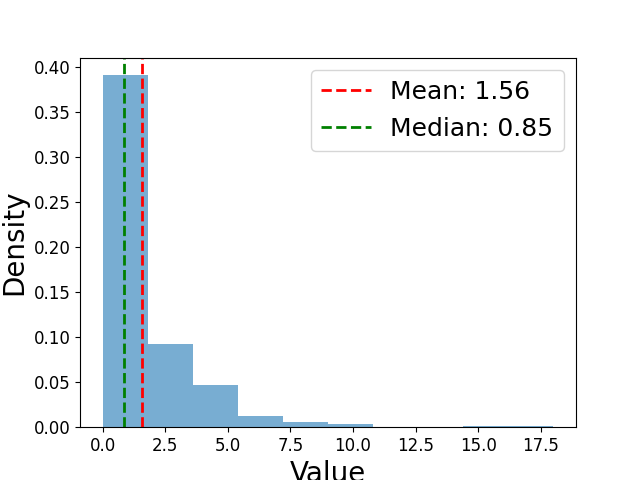}}  \end{subfigure}
            \hspace*{-0.3cm}
            \begin{subfigure}[\footnotesize{\textbf{Reasoning}}]{\includegraphics[width=0.25\linewidth]{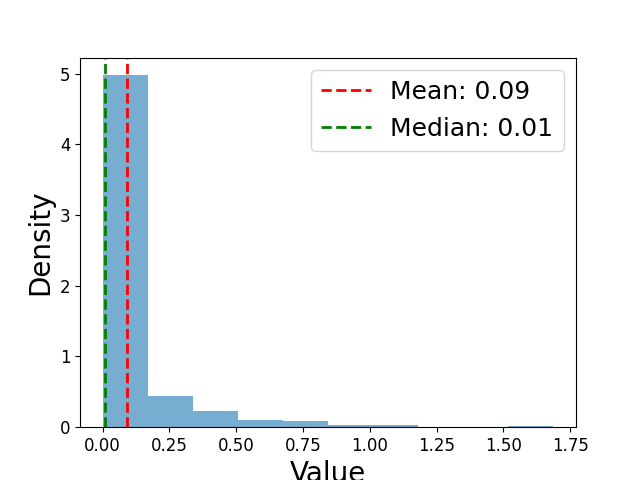}}   
            \end{subfigure}
        \end{minipage}%
    }
    \caption{\footnotesize (\textit{Top Row}) Distribution of sample variances in the rewards assigned to accepted responses. Each variance is computed across the outputs of 10 independently trained reward models. The median sample variance exceeds $3.81$ in all datasets, with values reaching as high as $10$, confirming substantial model-level variability in reward estimates for the same prompt–response pair. 
    (\textit{Bottom Row}) Distribution of the sample variances of reward differences between accepted and rejected responses. The spread in these distributions indicates that the models are not simple scalar translations of one another. Instead, the variance reflects genuine disagreement driven by the statistical and stochastic nature of training.}
    \label{fig:reward_modelling}
\end{figure*}

\paragraph{Observations} 
To investigate whether identically trained reward models exhibit disagreement on the same prompt–response pairs, we evaluate our ensemble on four datasets from the \textsc{RewardBenchmark} platform: \textsc{Chat}, \textsc{Chat-Hard}, \textsc{Safety}, and \textsc{Reasoning}. As an example, the \textsc{Chat} dataset comprises 358 prompt–response pairs of the form $(x, y^1, y^2)$, where $y^1$ is the preferred response and $y^2$ is the rejected one. This dataset is constructed from a mixture of sources, including \textsc{AlpacaEval Easy}, \textsc{AlpacaEval}, \textsc{AlpacaEval Hard}~\citep{li2023alpacaeval}, \textsc{MT Bench Easy}, and \textsc{MT Bench Medium}~\citep{zheng2023judging}. Details of the other benchmark compositions are provided in \citet{lambert2024rewardbench}.

For each accepted response $y^1$, we compute the reward $r_i(x, y^1)$ using each of the $10$ reward models and calculate the sample variance across these outputs. The top row of Figure~\ref{fig:reward_modelling} shows the distribution of these sample variances for all datasets. We observe that the reward variances range from $3$ to $14$, with a mean above $4$ and a median above $3$ in each case—indicating substantial variability across models, despite being trained on the same data. This disagreement arises due to factors such as limited data and the stochasticity inherent in optimization procedures.

To further assess whether these models differ beyond scalar shifts, we examine the variance in the reward difference between accepted and rejected responses, i.e., $r_i(x, y^1) - r_i(x, y^2)$. The bottom row of Figure~\ref{fig:reward_modelling} shows that this variance is also substantial, ruling out the possibility that models differ by constant offsets alone. If the models were simple translations of one another, we would expect the distribution to collapse to a Dirac delta at zero, which is clearly not the case. These results confirm that ensemble disagreement reflects meaningful variance in learned preferences, rather than superficial noise.

\subsection{Hyperparameter Details for Reward Model Finetuning}

The hyperparameter configuration used for training the single reward-head model is listed in Table~\ref{tab:hyperparams}, with all remaining settings aligned with those reported in \citet{wolf-etal-2020-transformers}. Table~\ref{tab:hw_requirements} provides a summary of the hardware specifications and resource utilization during training, including GPU memory, disk usage, and wall-clock time. The training was conducted on four NVIDIA A40 GPUs (48 GB each), with approximately 30 GB of disk space required for the dataset, model checkpoints, and logs. The total training time was 51 hours.
\begin{table*}[!htbp]
    \centering
    \begin{minipage}{0.45\textwidth}
        \centering
        \begin{adjustbox}{max width=\textwidth}
        \begin{tabular}{|c|c|}
            \hline
            \textbf{Hyperparameter}    & \textbf{Value} \\ \hline
            Effective Batch Size       & 32             \\ \hline
            Learning Rate              & 1e-5           \\ \hline
            Optimizer                  & Paged AdamW 32bit \\ \hline
            Weight Decay               & 0.001          \\ \hline
            LR Scheduler               & cosine         \\ \hline
            Epochs                     & 1              \\ \hline
            Global Train Steps         & 4125           \\ \hline
        \end{tabular}    
        \end{adjustbox}
        \caption{\footnotesize{Hyperparameters used in training the Single Reward Model.}}
        \label{tab:hyperparams}
    \end{minipage}%
    \hspace{0.04\textwidth} 
    \begin{minipage}{0.5\textwidth}
        \centering
        \begin{adjustbox}{max width=\textwidth}
        \begin{tabular}{|c|c|}
            \hline
            \textbf{Resource}          & \textbf{Details} \\ \hline
            GPU Model                  & NVIDIA A40 (48 GB) \\ \hline
            Number of GPUs             & 4               \\ \hline
            Total GPU Memory           & 12.68 GB        \\ \hline
            Total Disk Space Required  & 30 GB           \\ \hline
            Total Training Time        & 51 hours        \\ \hline
        \end{tabular}
        \end{adjustbox}
        \caption{\footnotesize{Hardware requirements for training the single reward model.}}
        \label{tab:hw_requirements}
    \end{minipage}
\end{table*}

The hyperparameter settings used for ensemble reward modeling are provided in Table~\ref{tab:hyperparams_ensemble}, with all other configurations following \citet{wolf-etal-2020-transformers}. Table~\ref{tab:ensemble-hardware} summarizes the hardware specifications and resource usage during ensemble training, including GPU memory, disk requirements, and total training duration. The ensemble was trained on four NVIDIA A40 GPUs (each with 48GB of memory). The full training process required approximately 40GB of disk space for datasets, checkpoints, and logs, and completed in 7~hours.

\begin{table*}[!htbp]
    \centering
    \begin{minipage}{0.45\textwidth}
        \centering
        \resizebox{\textwidth}{!}{%
        \begin{tabular}{|c|c|}
            \hline
            \textbf{Hyperparameter}    & \textbf{Value} \\ \hline
            Effective Batch Size       & 32             \\ \hline
            Learning Rate              & 1e-5           \\ \hline
            Optimizer                  & Paged AdamW 32bit \\ \hline
            Weight Decay               & 0.001          \\ \hline
            LR Scheduler               & cosine         \\ \hline
            Epochs                     & 0.5            \\ \hline
            Global Train Steps         & 2060           \\ \hline
        \end{tabular}%
        }
        \caption{\footnotesize{Hyperparameters used in training the Ensemble Reward Model.}}
        \label{tab:hyperparams_ensemble}
    \end{minipage}%
    \hspace{0.04\textwidth} 
    \begin{minipage}{0.50\textwidth}
        \centering
        \resizebox{0.9\textwidth}{!}{%
        \begin{tabular}{|c|c|}
            \hline
            \textbf{Resource}          & \textbf{Details} \\ \hline
            GPU Model                  & NVIDIA A40       \\ \hline
            Number of GPUs             & 4               \\ \hline
            Total GPU Memory           & 6.12 GB         \\ \hline
            Total Disk Space Required  & 38 GB           \\ \hline
            Total Training Time        & 7 hours         \\ \hline
        \end{tabular}%
        }
        \caption{\footnotesize{Hardware requirements for training the ensemble reward model.}}
        \label{tab:ensemble-hardware}
    \end{minipage}
\end{table*}

\section{Prompting Reward Models}
\label{app:prompt}
This section describes the prompt used to extract confidence intervals from \textit{Gemini-1.5-flash}, \textit{Gemini-2.0-flash}, and \textit{Deepseek-V3}.
\begin{center}
\begin{tcolorbox}[
  enhanced,
  colback=lightgray,
  colframe=lightgray,
  boxrule=0pt,
  sharp corners,
  width=0.9\textwidth,
  ]
  You represent a \textit{diverse} population of human beings across:\\
 
- Different Age Groups (10 to 60)\\
- Different \textit{moralities}, \textit{ethics} and \textit{religions}\\
- Different Genders and Sexual orientations\\
- Different \textit{Cultural}, \textit{Linguistic}, \textit{National}, \textit{Ethnic} backgrounds\\
- Different \textit{Educational}, \textit{Professional} and \textit{Economic} backgrounds\\
- Different \textit{Physical} and \textit{Mental} abilities\\
- Different \textit{Political} and \textit{Philosophical} beliefs\\
- Different \textit{Social} and \textit{Familial} structures\\
- Different and Alternative \textit{Historical} and \textit{Futuristic} contexts\\
- Different \textit{Technological} and \textit{Scientific} contexts\\
- Different \textit{Artistic} and \textit{Creative} contexts\\
- Neurodiversity, cognitive styles, and artistic sensibilities\\
 \\
Your task is to evaluate the aesthetic value of the following text. \\``Aesthetic value" refers to how stylistically pleasant, expressive, emotionally resonant, or creatively engaging the text feels to different human readers — regardless of factual correctness or usefulness. Note that aesthetic value is subjective and can vary widely across individuals and cultures — there are no right or wrong answers.\\
 \\
Return two real numbers `a' and `b' such that:\\
- $1 \leq a < b \leq 100$\\
- They represent a \textit{reasonably loose lower and upper bound} on the range of likely aesthetic judgments across the population described \\
- If the population is likely to disagree strongly, widen the range\\
- If the population is likely to agree strongly, narrow the range\\
- Do not include any commentary or explanation — just return the numbers on separate lines\\
 
-----\\
Text: "{text}"\\
-----\\
 
a:\\
b:\\
\end{tcolorbox}
\end{center}

\section{Proximal Policy Optimization (PPO)}
\label{sec:PPO}
\subsection{Methodology}
\paragraph{Dataset}  
We conduct PPO (as well as variance-aware PPO) training using a dataset constructed by combining the following open-source datasets: \textsc{alpacaeval-easy, alpacaeval-length, alpacaeval-hard, mt-bench-easy, mt-bench-medium}. From each entry in these datasets, we extract only the \texttt{prompt} field, which serves as the input to the language model during fine-tuning.

\begin{algorithm}[!htbp]
\footnotesize
\caption{Fine-Tuning and Collecting Policies by Method Type}
\label{alg:multi_policy_finetune}
\begin{algorithmic}[1]
\State \textbf{Global:} \textsc{Uncertainty\_Type} $\in \{\textsc{Prompted}, \textsc{Ensemble}\}$
\Statex \hspace{3em} \textsc{Methods} $= \{\textsc{VarianceAware}, \textsc{Vanilla} \}$

\State \textbf{Input:} 
\Statex \hspace{3em} \textsc{Dataset} $\mathcal{D}$, \textsc{Model} $\mathcal{M}$, \textsc{Reward Model} $\mathcal{R}$
\State \textbf{Output:} 
\Statex \hspace{3em} Policy sets: $\Pi_1$ (Vanilla), $\Pi_2$ (Variance-Aware)

\Procedure{MultiPolicyFinetune}{$\mathcal{D}, \mathcal{M}, \mathcal{R}$}
    \State Initialize policy lists: $\Pi_1 \gets [\,], \Pi_2 \gets [\,]$
    \State Define base policy: $\pi_0(x) \gets \mathcal{M}(x)$
    
    \For{\textsc{Method} in \textsc{Methods}}
        \For{$i = 1$ to $N$}
            \If{\textsc{Method} is \textsc{VarianceAware}}
                \State $\pi \gets \textsc{PPOUpdate}(\mathcal{D}, \mathcal{M}, \mathcal{R})$ \Comment{Algorithm~\ref{alg:ppo}}
                \State $\Pi_2 \gets \Pi_2 \cup \{\pi\}$
            \ElsIf{\textsc{Method} is \textsc{Vanilla}}
                \State $\pi \gets \textsc{PPOUpdate}(\mathcal{D}, \mathcal{M}, \mathcal{R})$ \Comment{Algorithm~\ref{alg:ppo}}
                \State $\Pi_1 \gets \Pi_1 \cup \{\pi\}$
            \EndIf
        \EndFor
    \EndFor

    \State \Return $(\Pi_1, \Pi_2)$
\EndProcedure
\end{algorithmic}
\end{algorithm}

\begin{algorithm}[!htbp]
\footnotesize
\caption{Evaluation Phase for Fine-Tuned and Reference Policies}
\label{alg:evaluate_finetuned_policies}
\begin{algorithmic}[1]
\State \textbf{Global:} \textsc{Uncertainty\_Type} $\in \{\textsc{Prompted}, \textsc{Ensemble}\}$

\State \textbf{Input:}
\Statex \hspace{3em} \textsc{Dataset} $\mathcal{D}$, \textsc{Model} $\mathcal{M}$, \textsc{Reward Model} $\mathcal{R}$, 
\Statex \hspace{3em} Policy sets: $\Pi_1$ (Vanilla), $\Pi_2$ (Variance-Aware) \Comment{Algorithm~\ref{alg:multi_policy_finetune}}

\State \textbf{Output:} 
\Statex \hspace{3em} Average true reward for each policy set: $\Bar{r}_{\Pi_1}$, $\Bar{r}_{\Pi_2}$, $\Bar{r}_{\Pi_0}$

\Procedure{EvaluatePolicies}{$\mathcal{D}, \mathcal{M}, \mathcal{R}, \Pi_1, \Pi_2$}
    \State Initialize average reward lists: $\Bar{r}_{\Pi_1} \gets [\,], \Bar{r}_{\Pi_2} \gets [\,], \Bar{r}_{\Pi_0} \gets [\,]$
    
    \For{each policy set $\Pi \in \{\Pi_1, \Pi_2\}$}
        \For{each policy $\pi \in \Pi$}
            \State Initialize reward list $\mathcal{R}_\pi \gets [\,]$
            \For{each prompt $x \in \mathcal{D}$}
                \State Generate response $y \gets \pi(x)$
                \State Concatenate $s \gets x \mathbin\Vert y$
                \State $\mathcal{R}(s) \gets \textsc{RewardModel}(s)$ \Comment{Algorithm~\ref{alg:reward_model}}
                \State $r^*(s) \gets \textsc{GetTrueReward}(\mathcal{R}(s))$ \Comment{Algorithm~\ref{alg:true_reward}}
                \State Append $r^*(s)$ to $\mathcal{R}_\pi$
            \EndFor
            \State Compute and append average: 
            $\Bar{r}_\Pi \gets \Bar{r}_\Pi \cup \left( \frac{1}{|\mathcal{R}_\pi|} \sum_{r \in \mathcal{R}_\pi} r \right)$
        \EndFor
    \EndFor


    \For{$i = 1$ to $|\Pi_1|$}
    \Comment{Evaluate the reference policy \(\pi_0(x) = \mathcal{M}(x)\) for \(|\Pi_1|\) trials}
        \State Initialize reward list $\mathcal{R}_{\pi_0}^i \gets [\,]$
        \For{each prompt $x \in \mathcal{D}$}
            \State Generate response $y \gets \mathcal{M}(x)$
            \State Concatenate $s \gets x \mathbin\Vert y$
            \State $\mathcal{R}(s) \gets \textsc{RewardModel}(s)$ \Comment{Algorithm~\ref{alg:reward_model}}
            \State $r^*(s) \gets \textsc{GetTrueReward}(\mathcal{R}(s))$ \Comment{Algorithm~\ref{alg:true_reward}}
            \State Append $r^*(s)$ to $\mathcal{R}_{\pi_0}^i$
        \EndFor
        \State Compute average: $\Bar{r}_{\pi_0}^i \gets \frac{1}{|\mathcal{R}_{\pi_0}^i|} \sum_{r \in \mathcal{R}_{\pi_0}^i} r$
        \State Append $\Bar{r}_{\pi_0}^i$ to $\Bar{r}_{\Pi_0}$
    \EndFor
    
    \State \Return $\Bar{r}_{\Pi_1}, \Bar{r}_{\Pi_2}, \Bar{r}_{\Pi_0}$
\EndProcedure
\end{algorithmic}
\end{algorithm}

\begin{algorithm}[!htbp]
\footnotesize
\caption{Unified PPO Update for Vanilla and Variance-Aware Fine-Tuning\label{alg:ppo}}
\begin{algorithmic}[1]
\State \textbf{Global:} \textsc{Uncertainty\_Type} $\in \{\textsc{Prompted}, \textsc{Ensemble}\}$
\Statex \hspace{3em} \textsc{Methods} $= \{\textsc{VarianceAware}, \textsc{Vanilla}\}$
\State \textbf{Input:} 
\Statex \hspace{3em} \textsc{Dataset} $\mathcal{D}$, \textsc{Model} $\mathcal{M}$, \textsc{Reward Model} $\mathcal{R}$
\State \textbf{Output:} 
\State \hspace{3em} Fine-tuned policy $\pi_\theta$
\Procedure{PPOUpdate}{$\mathcal{M}, \mathcal{R}$}
    \State Define $\pi_0(x) \gets \mathcal{M}(x)$
    \State Initialize $\pi_\theta(x) \gets \pi_0(x)$
    \For{$\text{iteration} = 1$ to $M$}
        \State Sample batch $\mathcal{D}_B = \{x_i\}_{i=1}^B$
        \For{each prompt $x \sim \mathcal{D}_B$}
            \State Generate response $y \gets \mathcal{M}(x)$
            \State Concatenate $s \gets x \mathbin\Vert y$
            \State $\mathcal{R}(s) \gets \textsc{RewardModel}(s)$ \Comment{Algorithm~\ref{alg:reward_model}}
            \State Get reward: $\hat{r}(s) \gets \textsc{Sample}(\mathcal{R}(s))$ \Comment{Algorithm~\ref{alg:sample_reward}}

            \If{\textsc{Method} is \textsc{VarianceAware}}
                \State Get variance: $\sigma^2(x, y) \gets \textsc{GetVariance}(\mathcal{R}(s))$ \Comment{Algorithm~\ref{alg:compute_variance}}
                \State Compute loss $\cL(\theta)$ using Eq.~\eqref{eq:variance_aware_loss}
            \Else
                \State Compute loss $\cL(\theta)$ using Eq.~\eqref{eq:vanilla_loss}
                \EndIf
        \EndFor
        \State Update $\theta$ via gradient ascent on $\mathcal{L}(\theta)$
    \EndFor
    \State \Return $\pi_\theta$
\EndProcedure
\end{algorithmic}
\end{algorithm}

\begin{algorithm}[!htbp]
\footnotesize
\caption{Reward Model Interface\label{alg:reward_model}}
\begin{algorithmic}[1]
\State \textbf{Global:} \textsc{Uncertainty\_Type} $\in \{\textsc{Prompted}, \textsc{Ensemble}\}$
\State \textbf{Input:} 
\State \hspace{1.2em} Concatenated string $s = x \mathbin\Vert y$ 
\State \textbf{Output:} 
\State \hspace{1.2em} Reward response $\mathcal{R}(s)$
\Procedure{RewardModel}{$s$}
    \If{\textsc{Uncertainty\_Type} is \textsc{Prompted}}
        \State \Return $[a, b] \gets \textsc{CallModel}(\textcolor{blue}{\textsc{Reward\_Prompt}} \mathbin\Vert s)$ \Comment{\scriptsize{Concatenate \textcolor{blue}{\textsc{Reward\_Prompt}} and $s$.}}
    \footnotesize    
    \ElsIf{\textsc{Uncertainty\_Type} is \textsc{Ensemble}}
        \State \Return $\{r_1, \dots, r_{10}\} \gets \textsc{CallModel}(s)$ \Comment{List of scalar rewards}
    \EndIf
\EndProcedure
\end{algorithmic}
\end{algorithm}

\begin{algorithm}[!htbp]
\footnotesize
\caption{Sampling from Reward Model Output\label{alg:sample_reward}}
\begin{algorithmic}[1]
\State \textbf{Global:} \textsc{Uncertainty\_Type} $\in \{\textsc{Prompted}, \textsc{Ensemble}\}$
\State \textbf{Input:} 
\State \hspace{1.2em} Reward output $\mathcal{R}(s)$ \Comment{Algorithm~\ref{alg:reward_model}}
\State \textbf{Output:} 
\State \hspace{1.2em} Sampled reward $\hat{r}(s)$
\Procedure{Sample}{$\mathcal{R}(s)$}
    \If{\textsc{Uncertainty\_Type} is \textsc{Prompted}}
        \State Let $[a, b] \gets \mathcal{R}(s)$
        \State \Return $\textsc{Uniform}(a, b)$
    \ElsIf{\textsc{Uncertainty\_Type} is \textsc{Ensemble}}
        \State Let $\{r_1, \dots, r_{10}\} \gets \mathcal{R}(s)$
        \State \Return $\frac{1}{10} \sum_{i=1}^{10} r_i$
    \EndIf
\EndProcedure
\end{algorithmic}
\end{algorithm}

\begin{algorithm}[!htbp]
\footnotesize
\caption{Computing Variance from Reward Model Output\label{alg:compute_variance}}
\begin{algorithmic}[1]
\State \textbf{Global:} \textsc{Uncertainty\_Type} $\in \{\textsc{Prompted}, \textsc{Ensemble}\}$
\State \textbf{Input:} 
\State \hspace{1.2em} Reward output $\mathcal{R}(s)$
\Comment{Algorithm~\ref{alg:reward_model}}
\State \textbf{Output:} 
\State \hspace{1.2em} Variance estimate $\sigma^2(s)$
\Procedure{GetVariance}{$\mathcal{R}(s)$}
    \If{\textsc{Uncertainty\_Type} is \textsc{Prompted}}
        \State Let $[a, b] \gets \mathcal{R}(s)$
        \State \Return $\frac{(b - a)^2}{12}$ 
    \ElsIf{\textsc{Uncertainty\_Type} is \textsc{Ensemble}}
        \State Let $\{r_1, \dots, r_{10}\} \gets \mathcal{R}(s)$
        \State Compute sample mean: $\bar{r} \gets \frac{1}{10} \sum_{i=1}^{10} r_i$
        \State \Return $\frac{1}{9} \sum_{i=1}^{10} (r_i - \bar{r})^2$
    \EndIf
\EndProcedure
\end{algorithmic}
\end{algorithm}

\begin{algorithm}[!htbp]
\footnotesize
\caption{Computing Ground-Truth Reward\label{alg:true_reward}}
\begin{algorithmic}[1]
\State \textbf{Global:} \textsc{Uncertainty\_Type} $\in \{\textsc{Prompted}, \textsc{Ensemble}\}$
\State \textbf{Input:} 
\State \hspace{1.2em} Reward output $\mathcal{R}(s)$
\State \textbf{Output:} 
\State \hspace{1.2em} Ground-truth reward $r^*(s)$
\Procedure{GetTrueReward}{$\mathcal{R}(s)$}
    \If{\textsc{Uncertainty\_Type} is \textsc{Prompted}}
        \State Let $[a, b] \gets \mathcal{R}(s)$
        \State \Return $r^*(s) \gets (a + b)/2$
    \ElsIf{\textsc{Uncertainty\_Type} is \textsc{Ensemble}}
        \State \Return $r^*(s) \gets \textsc{FsfairX-LLaMA3-RM-v0.1}(s)$ \Comment{\footnotesize Call external large reward model}
    \EndIf
\EndProcedure
\end{algorithmic}
\end{algorithm}

\paragraph{Methodology}  
We fine-tune \textsc{GPT-2}~\cite{radford2019language}, \textsc{Qwen2.5-0.5B}~\cite{Yang2024Qwen25TR}, and \textsc{Mistral-7B}~\cite{jiang2023mistral} using uncertainty-aware reward modeling. Specifically, \textsc{GPT-2} is paired with a custom scalar reward model as described in Appendix~\ref{sec:reward_model}, while \textsc{Qwen2.5-0.5B} and \textsc{Mistral-7B} are aligned using \textsc{Prompted} reward models constructed from \textsc{Gemini-1.5-flash}, \textsc{Gemini-2.0-flash}, and \textsc{Deepseek-V3}.

We introduce a variance-aware fine-tuning framework that explicitly incorporates uncertainty estimates from the reward model into the policy optimization process. Our approach is compared against the standard PPO baseline across two classes of reward uncertainty:

\begin{itemize}[leftmargin=1.5em]
    \item \textbf{\textsc{Prompted}}: The reward model is queried via a structured natural language prompt (Appendix~\ref{app:prompt}), returning an interval-valued reward $[a, b]$ for a given prompt-response pair. The reward is assumed to be uniformly distributed over this interval.
    
    \item \textbf{\textsc{Ensemble}}: The reward model consists of an ensemble of independently trained scalar-valued reward heads as described in Appendix~\ref{sec:reward_model}. For each input, the ensemble returns a list of scalar rewards $\{r_1, \dots, r_{10}\}$. The sample mean and sample variance are used as the reward estimate and uncertainty measure, respectively.
\end{itemize}

For each reward model configuration, we fine-tune a total of \( N = 80 \) policies using two methods:
\begin{itemize}[leftmargin=1.5em, nosep]
    \item \textsc{Vanilla PPO}: Optimizes the policy using sampled scalar rewards without accounting for reward variance.
    \item \textsc{Variance-Aware PPO}: Incorporates reward variance into the PPO objective to regularize updates more cautiously under uncertainty.
\end{itemize}

The corresponding optimization objectives are:
\begin{equation}
\label{eq:variance_aware_loss}
\mathcal{L}_{\textsc{VA}}(\theta) = \sum_{x, y} \left[ \hat{r}(x, y) - \sigma^2(x, y) \log \frac{\pi_\theta(y|x)}{\pi_0(y|x)} \right]
\end{equation}
\begin{equation}
\label{eq:vanilla_loss}
\mathcal{L}_{\textsc{Vanilla}}(\theta) = \sum_{x, y} \left[ \hat{r}(x, y) - \log \frac{\pi_\theta(y|x)}{\pi_0(y|x)} \right]
\end{equation}

The high-level fine-tuning workflow is described in Algorithm~\ref{alg:multi_policy_finetune} and proceeds as follows:

\begin{enumerate}[leftmargin=1.5em]
    \item For each training iteration and for each policy:
    \begin{enumerate}[nosep]
        \item Sample a prompt \( x \in \mathcal{D} \), and generate a response \( y \sim \pi_\theta(\cdot|x) \).
        \item Form a string \( s = x \mathbin\Vert y \) and send it to the reward model.
        \item The reward model returns either:
        \begin{itemize}[nosep]
            \item an interval $[a, b]$ if \textsc{Prompted}, with reward sampled as $\hat{r}(x, y) \sim \mathcal{U}[a, b]$, and variance $\sigma^2(x, y) = \frac{(b - a)^2}{12}$;
            \item a list of values $\{r_1, \dots, r_{10}\}$ if \textsc{Ensemble}, with $\hat{r}(x, y)$ as the mean and $\sigma^2(x, y)$ as the empirical variance.
        \end{itemize}
        \item Update the policy via PPO using Equation~\eqref{eq:vanilla_loss} or~\eqref{eq:variance_aware_loss}, depending on the chosen method.
    \end{enumerate}
    \item The fine-tuned policies are saved into: \( \Pi_1 \) for \textsc{Vanilla PPO} and \( \Pi_2 \) for \textsc{Variance-Aware PPO}.
\end{enumerate}

\paragraph{Evaluation}  
To assess the effectiveness of the fine-tuned LLMs, we evaluate the policies produced by both \textsc{Vanilla} and \textsc{Variance-Aware} PPO methods, along with the original (reference) LLM:

\begin{itemize}[leftmargin=1.5em, nosep]
    \item For each LLM and reward model setup, we fine-tune 80 policies using \textsc{Vanilla} PPO and another 80 using the proposed \textsc{Variance-Aware} PPO, resulting in a total of 160 fine-tuned policies per model.
    \item These policies are organized into two sets: $\Pi_1$ (Vanilla) and $\Pi_2$ (Variance-Aware).
    \item Additionally, to evaluate the reference model $\mathcal{M}$, we simulate 80 independent runs, generating one response per prompt in each run. These are stored as the reference set $\Pi_0$.
\end{itemize}

The evaluation procedure for each policy $\pi \in \Pi_1 \cup \Pi_2 \cup \Pi_0$ follows Algorithm~\ref{alg:evaluate_finetuned_policies}:
\begin{enumerate}[leftmargin=1.5em, nosep]
    \item For each prompt $x \in \mathcal{D}$:
    \begin{itemize}[nosep]
        \item Generate response $y = \pi(x)$.
        \item Concatenate $s = x \mathbin\Vert y$.
        \item Pass $s$ to the reward model $\mathcal{R}$ via Algorithm~\ref{alg:reward_model}.
        \item Estimate the \textit{true reward} $r^*(s)$ using Algorithm~\ref{alg:true_reward}:
        \begin{itemize}[nosep]
            \item If \textsc{Prompted}, then $r^*(s) = (a + b)/2$, assuming a uniform distribution over $[a, b]$.
            \item If \textsc{Ensemble}, then $r^*(s)$ is computed via an external reference model, \textsc{FsfairX-LLaMA3-RM-v0.1}.
        \end{itemize}
    \end{itemize}
    \item Compute the average of $r^*(s)$ values across all prompts and policies in each set $\Pi_1$, $\Pi_2$, and $\Pi_0$.
\end{enumerate}

The resulting average rewards $\bar{r}_{\Pi_1}$, $\bar{r}_{\Pi_2}$, and $\bar{r}_{\Pi_0}$ capture the alignment effectiveness of each method. These distributions are visualized as histograms to compare alignment quality across models. The base LLM serves as a baseline to assess the benefit of fine-tuning under uncertainty.

\subsection{Hyperparameter and Resource Requirements}
\label{sec:hardware}
\subsubsection{Ensemble Reward Models}

All experiments in this configuration are conducted using \textsc{GPT-2} paired with a custom ensemble reward model, as detailed in Appendix~\ref{sec:reward_model}. This ensemble consists of ten independently initialized scalar reward heads trained on the same frozen foundation model. We train separate policy sets using both \emph{Vanilla PPO} and \emph{Variance-Aware PPO}. The hyperparameter settings for each method are summarized in Tables~\ref{tab:hyperparams_vanilla_ppo} and~\ref{tab:hyperparams_variance_ppo}, respectively. Most hyperparameters follow the design choices in \citet{vonwerra2022trl}. The primary difference between the two approaches is the selection of the $\beta$ parameter (defined as the Initial KL Coeff), which adjusts the KL constraint strength during policy optimization. In our setup, we calibrate $\beta$ such that both variants maintain similar KL divergence to the reference policy, thereby controlling for exploration range.

We also document the hardware and resource usage for training one policy model in Table~\ref{table:hw_requirements_ppo}. Each model was trained on a system equipped with four NVIDIA A40 GPUs (48\,GB each), with total GPU memory consumption averaging 18.4\,GB. The full pipeline—including dataset loading, PPO updates, logging, and checkpointing—requires approximately 6.55\,GB of disk space and completes in roughly 3.86 hours.

\begin{table*}[!htbp]
    \centering
    \begin{minipage}{0.45\textwidth}
        \centering
        \resizebox{0.8\textwidth}{!}{%
        \begin{tabular}{|c|c|}
            \hline
            \textbf{Hyperparameter}    & \textbf{Value} \\ \hline
            Effective Batch Size       & 128             \\ \hline
            Learning Rate              & 1.414e-5        \\ \hline
            Epochs                     & 1               \\ \hline
            Steps                      & 192             \\ \hline
            Initial KL Coeff           & 0.2             \\ \hline
            Adaptive KL Control        & False           \\ \hline
        \end{tabular}
        }
        \caption{\footnotesize Hyperparameters for \textsc{Vanilla PPO} training with ensemble reward models.}
        \label{tab:hyperparams_vanilla_ppo}
    \end{minipage}%
    \hspace{0.04\textwidth}
    \begin{minipage}{0.45\textwidth}
        \centering
        \resizebox{0.75\textwidth}{!}{%
        \begin{tabular}{|c|c|}
            \hline
            \textbf{Hyperparameter}    & \textbf{Value} \\ \hline
            Effective Batch Size       & 128             \\ \hline
            Learning Rate              & 1.5e-5          \\ \hline
            Epochs                     & 1               \\ \hline
            Steps                      & 192             \\ \hline
            Initial KL Coeff           & 0.05            \\ \hline
            Adaptive KL Control        & False           \\ \hline
        \end{tabular}
        }
        \caption{\footnotesize Hyperparameters for \textsc{Variance-Aware PPO} training with ensemble reward models.}
        \label{tab:hyperparams_variance_ppo}
    \end{minipage}
\end{table*}

\begin{table}[!htbp]
\centering
\begin{adjustbox}{max width = 0.5\textwidth}
\begin{tabular}{|c|c|}
\hline
\textbf{Resource}                 & \textbf{Details} \\ \hline
GPU Model                         & NVIDIA A40       \\ \hline
Number of GPUs                    & 4                \\ \hline
Total GPU Memory Used             & 18.4 GB          \\ \hline
Total Disk Space Required         & 6.55 GB          \\ \hline
Total Training Time               & 3.86 hours       \\ \hline
\end{tabular}
\end{adjustbox}
\caption{\footnotesize Hardware requirements for training a single PPO model using \textsc{GPT-2} and a custom ensemble reward model.}
\label{table:hw_requirements_ppo}
\end{table}

\subsubsection{Prompted Reward Models}

We evaluate two LLM configurations using prompted reward models:

\begin{itemize}[leftmargin=1.5em, nosep]
    \item \textsc{Qwen2.5-0.5B} is fine-tuned without any parameter-efficient methods or quantization. The training hyperparameters for both \textsc{Vanilla} PPO and \textsc{Variance-Aware} PPO are listed in Table~\ref{table:quen_hyperparamters}. Hardware configuration and resource requirements for training 80 policies are summarized in Table~\ref{table:hw_qwen_requirements}. Each model requires approximately 77\,GB of GPU memory, 2.9\,GB of disk space, and 30 minutes of training time. All experiments were conducted on a cluster of 4 NVIDIA A100 GPUs.

    \item \textsc{Mistral-7B} is fine-tuned using LoRA-based parameter-efficient techniques and 4-bit quantization via BitsAndBytes. The training hyperparameters for both PPO variants are shown in Table~\ref{table:mistral_hyperparameters}. The LoRA configuration is provided in Table~\ref{table:mistral_peft_config}, and the quantization setup is detailed in Table~\ref{table:quant_mistral_config}. Each model requires approximately 70\,GB of GPU memory, 17\,GB of disk space, and 45 minutes of training time. Training was performed on a cluster of 4 NVIDIA A100 GPUs.
\end{itemize}

\begin{table}[!htbp]
\centering
\begin{minipage}{0.48\textwidth}
\centering
\begin{tabular}{|l|c|}
\hline
\textbf{Hyperparameter} & \textbf{Value} \\
\hline
Batch Size & 16 \\
Mini Batch Size & 4 \\
Learning Rate & 3.14e-9 \\
Gradient Accumulation Steps & 1 \\
Initial KL Coeff & 0.3 \\
Adaptive KL Control & True \\
KL Penalty Type & kl \\
Target KL Divergence & 0.7 \\
Horizon & 19 \\
\hline
\end{tabular}
\caption{\footnotesize Hyperparameters for PPO fine-tuning \textsc{Qwen2.5-0.5B} with prompted reward models.}
\label{table:quen_hyperparamters}
\end{minipage}
\hfill
\begin{minipage}{0.48\textwidth}
\centering
\begin{tabular}{|l|c|}
\hline
\textbf{Hyperparameter} & \textbf{Value} \\
\hline
Effective Batch Size & 64 \\
Mini Batch Size & 16 \\
Learning Rate & 5.7e-5 \\
Gradient Accumulation Steps & 4 \\
Initial KL Coeff & 0.1 \\
Adaptive KL Control & True \\
KL Penalty Type & kl \\
Target KL Divergence & 0.7 \\
Horizon & 19 \\
Clip Range & 0.2 \\
Clip Range (Value Function) & 0.5 \\
Value Function Coeff & 0.3 \\
\hline
\end{tabular}
\caption{\footnotesize Hyperparameters for PPO fine-tuning \textsc{Mistral-7B} with prompted reward models.}
\label{table:mistral_hyperparameters}
\end{minipage}
\end{table}

\begin{table*}[!htbp]
\centering
\begin{minipage}[t]{0.42\linewidth}
\centering
\begin{tabular}{|l|c|}
\hline
\textbf{Resource} & \textbf{Qwen2.5-0.5B} \\
\hline
GPU Model & NVIDIA A100 \\
Number of GPUs & 4 \\
GPU Memory per Device & 77 GB \\
Total Disk Space Required & 2.9 GB \\
Training Time per Model & 30 minutes \\
\hline
\end{tabular}
\caption{\footnotesize Hardware requirements for fine-tuning \textsc{Qwen2.5-0.5B}.}
\label{table:hw_qwen_requirements}
\end{minipage}%
\hfill
\begin{minipage}[t]{0.42\linewidth}
\centering
\begin{tabular}{|l|c|}
\hline
\textbf{Resource} & \textbf{Mistral-7B} \\
\hline
GPU Model & NVIDIA A100 \\
Number of GPUs & 4 \\
GPU Memory per Device & 70 GB \\
Total Disk Space Required & 17 GB \\
Training Time per Model & 45 minutes \\
\hline
\end{tabular}
\caption{\footnotesize Hardware requirements for fine-tuning \textsc{Mistral-7B}.}
\label{table:hw_mistral_requirements}
\end{minipage}
\end{table*}

\begin{table*}[!htbp]
\centering
\begin{minipage}[t]{0.43\linewidth}
\centering
\begin{tabular}{|l|c|}
\hline
\textbf{Configuration Parameter} & \textbf{Value} \\
\hline
LoRA Rank (Low-rank dimension) & 128 \\
LoRA Scaling Factor & 32 \\
LoRA Dropout Rate & 0.05 \\
Bias Adaptation & None \\
\hline
\end{tabular}
\caption{\footnotesize LoRA configuration for fine-tuning \textsc{Mistral-7B} using PPO.}
\label{table:mistral_peft_config}
\end{minipage}%
\hfill
\begin{minipage}[t]{0.43\linewidth}
\centering
\begin{tabular}{|l|c|}
\hline
\textbf{Configuration Parameter} & \textbf{Value} \\
\hline
Enable 4-bit Weight Loading & True \\
4-bit Compute Data Type & bfloat16 \\
Enable Double Quantization & True \\
4-bit Quantization Type & Normal Float 4 (nf4) \\
\hline
\end{tabular}
\caption{\footnotesize BitsAndBytes configuration for loading \textsc{Mistral-7B} with QLoRA.}
\label{table:quant_mistral_config}
\end{minipage}
\end{table*}

The hyperparameters used for response generation during fine-tuning of the LLM are summarized in Table~\ref{table:sampling-hyperparameters}.

\begin{table}[h]
\centering
\begin{tabular}{|l|c|}
\hline
\textbf{Hyperparameter} & \textbf{Value} \\
\hline
Maximum number of generated tokens           & 32   \\
Minimum length of generated sequence         & 8    \\
Top-$k$ sampling cutoff (most likely tokens) & 50   \\
Nucleus sampling threshold ($p$)             & 0.90 \\
Softmax temperature (randomness)             & 1.5  \\
Stochastic decoding enabled                  & Yes  \\
\hline
\end{tabular}
\caption{Sampling hyperparameters used for text generation.}
\label{table:sampling-hyperparameters}
\end{table}

\section*{Limitations}

While our variance‐aware RLHF framework addresses an important gap in existing alignment methods, several limitations merit discussion:

\paragraph{Lack of a Universal Definition of `Alignment'}
While this work is notionally situated in the area of methods for alignment of AI models, we must bear in mind that the objective of `aligning' AI models to `human values' lacks a formal and concrete definition. The diversity of human preferences, judgements, biases, intuition and aspirations may very well render the search for a satisfactory definition of alignment impossible. We work only within the narrow technical framework of alignment for AI models that is in force today, based on reducing the measurement of alignment to ideal scalar reward scores, and maximizing these rewards in the hope that they result in well-aligned systems. 

\paragraph{Quality of Variance Estimates}  
Our approach relies on reasonably accurate per–prompt–response variance estimates from the reward model. In the “prompted” setting we assume a uniform distribution over the returned interval, and in the “ensemble” setting we use the empirical sample variance. If these estimates are poorly calibrated (e.g.\ heavy‐tailed errors, systematic bias across prompts), the variance penalty may under‐ or over‐correct, potentially harming performance. Integrating more sophisticated uncertainty quantification (e.g.\ Bayesian ensembling, temperature‐scaled intervals) is left to future work.

\paragraph{Computational Overhead}  
Training with an ensemble of reward heads (or multiple forward passes) incurs nontrivial compute and memory costs relative to single‐model RLHF. Although we mitigate this via shared backbones and parallel heads, scaling to very large reward models (e.g.\ 30B+ parameters) or budget‐constrained settings may be challenging. Exploring low‐cost approximations of the variance penalty (e.g.\ moment‐matching or distilled variance estimators) is an important direction.

\paragraph{Task and Domain Generality}  
Our experiments focus on synthetic interval‐based rewards and open‐source preference datasets (e.g.\ RewardBenchmark’s Chat and Safety tasks). It remains to be seen how variance‐aware fine‐tuning performs on large‐scale, proprietary RLHF setups (e.g.\ summarization, code generation, or multi‐turn dialogue) and in settings where human feedback is collected online.

\paragraph{Risk Metric Scope}  
We define “risk” as the probability of underperforming a reference policy under the true reward model. Other risk measures—such as Conditional Value at Risk (CVaR), worst‐case regret, or percentile‐based bounds—may capture different aspects of policy robustness. Extending our theoretical framework to these alternative metrics could broaden applicability.

\paragraph{Hyperparameter Sensitivity}  
The strength of the variance penalty (\(\beta\) in Equation~\eqref{eq:variance_aware_loss}) must be chosen carefully to balance exploration and conservatism. While we align \(\beta\) with a target KL divergence in our experiments, more automated schemes for selecting this hyperparameter (e.g.\ via validation‐set variance calibration) could improve ease of use.

\medskip

In summary, our work highlights the importance of accounting for reward‐model uncertainty in RLHF, but practical deployment will require addressing variance‐estimation quality, computational cost, and evaluation on a broader range of tasks and risk criteria.  





\end{document}